\documentclass{article} 
\usepackage{iclr2023_conference,times}

\usepackage[utf8]{inputenc} 
\usepackage[T1]{fontenc}    
\usepackage{hyperref}       
\usepackage{url}            
\usepackage{booktabs}       
\usepackage{amsfonts}       
\usepackage{nicefrac}       
\usepackage{microtype}      
\usepackage{amsthm,amsmath,amssymb,natbib}
\usepackage{bbm}
\usepackage{graphicx}
\usepackage{mathtools}
\usepackage{cutwin}
\usepackage{caption}
\usepackage{subcaption}
\usepackage{paralist}
\usepackage[ruled,vlined]{algorithm2e}
\usepackage{float}
\usepackage{wrapfig}
\usepackage{enumitem}
\usepackage{subfiles} 

\renewcommand{\vec}[1]{\mathbf{#1}}
\newcommand{\x}{\vec x}

\newcommand{\y}{{\vec y}}

\newcommand{\Hilbert}{\mathcal{H}}
\newcommand{\R}{\mathbb{R}}
\newcommand{\X}{\mathcal{X}}
\newcommand{\Y}{\mathcal{Y}}
\newcommand{\tG}{\widetilde{G}}
\newcommand{\tg}{\widetilde{g}}
\newcommand{\tf}{\widetilde{f}}
\newcommand{\tc}{\widetilde{c}}
\newcommand{\norm}[1]{\left\lVert#1\right\rVert}
\newcommand*\diff{\mathop{}\!\mathrm{d}}

\DeclareMathOperator*{\argmin}{arg\,min}
\newcommand{\innerpro}[1]{\left\langle#1\right\rangle}

\newtheorem{lemma}{Lemma}

\newtheorem{thm}{Theorem}

\newtheorem{hypothesis}{Hypothesis}

\newcommand{\new}[1]{{\color{black}#1}}

\iclrfinalcopy

\title{Addressing Parameter Choice Issues in Unsupervised Domain Adaptation by Aggregation}

%

\author{
\noindent
  \!Marius-Constantin Dinu$^{1,2}$\quad
  Markus Holzleitner$^1$ \quad
  Maximilian Beck$^1$ \And
  Duc Hoan Nguyen$^5$ \quad
  Andrea Huber$^1$ \quad
  Hamid Eghbal-zadeh$^{1}$ \quad
  Bernhard A.~Moser$^4$ \And
  Sergei V.~Pereverzyev$^5$ \quad
  Sepp Hochreiter$^{1,3}$ \quad
  Werner Zellinger$^{5}$\\\\
  $^1$ELLIS Unit Linz and LIT AI Lab, Institute for Machine Learning, Johannes Kepler University Linz\\
  $^2$Dynatrace Research\\
  $^3$Institute of Advanced Research in Artificial Intelligence\\
  $^4$Software Competence Center Hagenberg\\
  $^5$Johann Radon Institute for Computational and Applied Mathematics, Austrian Academy of Sciences
}

\begin{document}

\maketitle
\vspace{-10pt}
\begin{abstract}
\vspace{-10pt}
We study the problem of choosing algorithm hyper-parameters in unsupervised domain adaptation, i.e., with labeled data in a source domain and unlabeled data in a target domain, drawn from a different input distribution. We follow the strategy to compute several models using different hyper-parameters, and, to subsequently compute a linear aggregation of the models. While several heuristics exist that follow this strategy, methods are still missing that rely on thorough theories for bounding the target error. In this turn, we propose a method that extends weighted least squares to vector-valued functions, e.g., deep neural networks. We show that the target error of the proposed algorithm is asymptotically not worse than twice the error of the unknown optimal aggregation. We also perform a large scale empirical comparative study on several datasets, including text, images, electroencephalogram, body sensor signals and signals from mobile phones. Our method\footnote{Large scale benchmark experiments are available at \url{https://github.com/Xpitfire/iwa};\\\href{mailto:dinu@ml.jku.at}{dinu@ml.jku.at}, \href{mailto:werner.zellinger@ricam.oeaw.ac.at}{werner.zellinger@ricam.oeaw.ac.at}} outperforms deep embedded validation (DEV) and importance weighted validation (IWV) on all datasets, setting a new state-of-the-art performance for solving parameter choice issues in unsupervised domain adaptation with theoretical error guarantees. We further study several competitive heuristics, all outperforming IWV and DEV on at least five datasets. However, our method outperforms each heuristic on at least five of seven datasets.

\end{abstract}

\section{Introduction}
\label{sec:introduction}
The goal of \textit{unsupervised domain adaptation} is to learn a model on unlabeled data from a \textit{target} input distribution using labeled data from a different \textit{source} distribution~\citep{pan2010survey,ben2010theory}.
If this goal is achieved, medical diagnostic systems can successfully be trained on unlabeled images using labeled images with a different modality~\citep{varsavsky2020test,zou2020unsupervised};
segmentation models for natural images can be learned using only labeled data from computer simulations~\cite{peng2018visda};
natural language models can be learned from unlabeled biomedical abstracts by means of labeled data from financial journals~\citep{blitzer2006domain};
industrial quality inspection systems can be learned on unlabeled data from new products using data from related products~\citep{jiao2019classifier,zellinger2020multi}.

However, missing target labels combined with distribution shift makes parameter choice a hard problem~\citep{sugiyama2007covariate,you2019towards,saito2021tune,zellinger2021balancing,musgrave2021unsupervised}.
Often, one ends up with a sequence
of models, e.g.,~originating from different hyper-parameter configurations~\citep{ben2007analysis,saenko2010adapting,ganin2016domain,long2015learning,zellinger2017central,peng2019moment}.
In this work, we study the problem of constructing an optimal aggregation using all models in such a sequence.
Our main motivation is that the error of such an optimal aggregation is clearly smaller than the error of the best single model in the sequence.

Although methods with mathematical error guarantees have been proposed to select the best model in the sequence~\citep{sugiyama2007covariate,kouw2019robust,you2019towards,zellinger2021balancing}, methods for learning aggregations of the models are either heuristics or their theory guarantees are limited by severe assumptions (cf.~\citet{wilson2020survey}).
Typical aggregation approaches are (a) to learn an aggregation on source data only~\citep{nozza2016deep}, (b) to learn an aggregation on a set of (unknown) labeled target examples~\citep{xia2013feature,trboost2017,daume2006domain,duan2012domain}, (c) to learn an aggregation on target examples (pseudo-)labeled based on confidence measures of the given models~\citep{zhou2021domain,ahmed2022cleaning,tu2012dynamical,zou2018unsupervised,saito2017asymmetric}, (d) to aggregate the models based on data-structure specific transformations~\citep{yang2012domain,ha2021health}, and, (e) to use specific (possibly not available) knowledge about the given models, such as information obtained at different time-steps of its gradient-based optimization process~\citep{french2018self,laine2016temporal,tarvainen2017mean,athiwaratkun2018there,al2011adaptive} or the information that the given models are trained on different (source) distributions~\citep{hoffman2018algorithms,rakshit2019unsupervised,xu2018deep,kang2020contrastive,zhang2015multi}.
One problem shared among all methods mentioned above is that they cannot guarantee a small error, even if the sample size grows to infinity.
See Figure~\ref{fig:grafical_abstract} for a simple illustrative example.

\begin{figure}[t]
    \center{\includegraphics[width=1\linewidth]{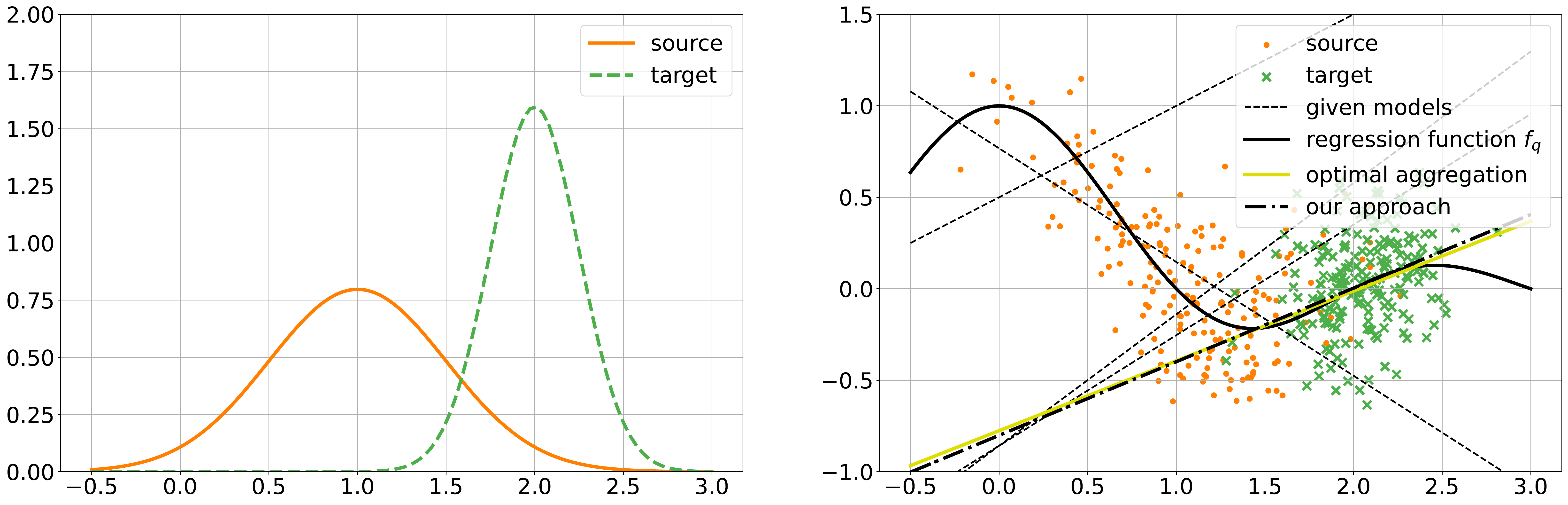}}
    \caption{Unsupervised domain adaptation problem~\citep{shimodaira2000improving,sugiyama2007covariate,you2019towards}. Left: Source distribution (solid) and target distribution (dashed). Right: A sequence of different linear models (dashed) is used to find the optimal linear aggregation of the models (solid). Model selection methods~\citep{sugiyama2007covariate,kouw2019robust,you2019towards,zellinger2021balancing} cannot outperform the best single model in the sequence, confidence values as used in~\citet{zou2018unsupervised} are not available, and, approaches based on averages or tendencies of majorities of models~\citep{saito2017asymmetric} suffer from a high fraction of large-error-models in the sequence.
    In contrast, our approach (dotted-dashed) is nearly optimal.
    In addition, the model computed by our method provably approaches the optimal linear aggregation for increasing sample size. For further details on this example we refer to Section \ref{sec:datasets} in the Supplementary Material.}
    \label{fig:grafical_abstract}
    \vspace{-30pt}
\end{figure}

In this work, we propose (to the best of our knowledge) the first algorithm for computing aggregations of vector-valued models for unsupervised domain adaptation with target error guarantees.
We extend the \textit{importance weighted least squares algorithm}~\citep{shimodaira2000improving} and corresponding recently proposed error bounds~\citep{gizewski2022regularization} to linear aggregations of vector-valued models.
The importance weights are the values of an estimated ratio between target and source density evaluated at the examples.
Every method for density-ratio estimation can be used as a basis for our approach, e.g.~\citet{sugiyama2012density,kanamori2012statistical} and references therein.
Our error bound proves that the target error of the computed aggregation is asymptotically at most twice the target error of the optimal aggregation.

In addition, we perform extensive empirical evaluations on several datasets with academic data (Transformed~Moons), text data (Amazon~Reviews~\citep{blitzer2006domain}), images (MiniDomainNet~\citep{peng2019moment,zellinger2021balancing}), electroencephalography signals (Sleep-EDF~\citep{Eldele:21,Goldberger:00}), body sensor signals (UCI-HAR~\citep{Anguita:13}, WISDM~\citep{Kwapisz:11}), and, sensor signals from mobile phones and smart watches (HHAR~\citep{Stisen:15}).

We compute aggregations of models obtained from different hyper-parameter settings of 11 domain adaptation methods (e.g., DANN~\citep{ganin2016domain} and Deep-Coral~\cite{sun2016deep}).
Our method sets a new state of the art for methods with theoretical error guarantees, namely importance weighted validation (IWV) \citep{sugiyama2007covariate} and deep embedded validation (DEV) \citep{kouw2019robust}, on all datasets.
We also study (1) classical least squares aggregation on source data only, (2) majority voting on target predictions, (3) averaging over model confidences, and (4) learning based on pseudo-labels.
All of these heuristics outperform IWV and DEV on at least five of seven datasets, which is a result of independent interest.
In contrast, our method outperforms each heuristic on at least five of seven datasets.

Our main contributions are summarized as follows:
\vspace{-8pt}
\begin{itemize}
\itemsep0em
    \item We propose the (to the best of our knowledge) first algorithm for ensemble learning of vector-valued models in (single-source) unsupervised domain adaptation that satisfies a non-trivial target error bound.
    \item We prove that the target error of our algorithm is asymptotically (for increasing sample sizes) at most twice the target error of the unknown optimal aggregation.
    \item We outperform IWV and DEV, and therefore set a new state-of-the-art performance for re-solving parameter choice issues under theoretical target error guarantees.
    \item We describe four heuristic baselines which all outperform IWV and DEV on at least five of seven datasets. Our method outperforms each heuristic on at least five of seven datasets.
    \item Our method tends to be more stable than others w.r.t.~adding inaccurate models to the given sequence of models.
\end{itemize}
\vspace{-10pt}

\section{Related Work}
\label{sec:related_work}
\vspace{-10pt}
It is well known that aggregations of models in an ensemble often outperform individual models~\citep{dong2020survey,goodfellow2016deep}.
Traditional ensemble methods that have shown the advantage of aggregation are
Boosting \citep{Schapire:90,Breiman:98},
Bootstrap Aggregating (bagging) \citep{Breiman94,Breiman96}
and Stacking \citep{Wolpert:92,Breiman:96a}.
For example, averages of multiple models pre-trained on data from a distribution different from the target one have recently been shown to achieve state-of-the-art performance on ImageNet~\citep{wortsman2022model} and their good generalization properties can be related to flat minima~\citep{hochreiter1994simplifying,hochreiter1997flat}.
However, most such methods don't take into account a present distribution shift.

Although some ensemble learning methods exist, which take into account a present distribution shift, in contrast to our work, they are either relying on labeled target data~\citep{nozza2016deep,xia2013feature,daume2006domain,trboost2017,mayr2016deeptox}, are restricted by fixing the aggregation weights to be the same~\citep{raza2019bagging}, make assumptions on the models in the sequence or the corresponding process for learning the models~\citep{yang2012domain,ha2021health,french2018self,laine2016temporal,tarvainen2017mean,athiwaratkun2018there,al2011adaptive,hoffman2018algorithms,rakshit2019unsupervised,xu2018deep,kang2020contrastive,zhang2015multi}, or, learn an aggregation based on the heuristic approach of (pseudo-)labeling some target data based on confidence measures of models in the sequence~\citep{zhou2021domain,ahmed2022cleaning,tu2012dynamical,zou2018unsupervised,saito2017asymmetric}.
Another crucial difference of all methods above is that none of these methods can guarantee a small target error in the general setting (distribution shift, vector valued models, different classes, single source domain) described above, even if the sample size grows to infinity.

Another branch of research are methods which aim at selecting the best model in the sequence.
Although, such methods with error bounds have been proposed for the general setting above~\citep{sugiyama2007covariate,you2019towards,zellinger2021balancing}, they cannot overcome a limited performance of the best model in the given sequence (cf.~Figure~\ref{fig:grafical_abstract} and Section~6 in the Supplementary Material of~\citet{zellinger2021balancing}).
In contrast, our method can outperform the best model in the sequence, and our empirical evaluations show that this is indeed the case in practical examples.
A recent kernel-based algorithm for univariate regression, that is similar to ours, can be found in~\citet{gizewski2022regularization}.
However, in contrast to~\citet{gizewski2022regularization}, our method allows a much more general form of vector-valued models which are not necessarily obtained from regularized kernel least squares, and, can therefore be applied to practical deep learning tasks.

Our work employs technical tools developed in~\citet{caponnetto2007optimal,caponnetto2005risk}.
In fact, we extend~\citet{caponnetto2007optimal,caponnetto2005risk} to deal with importance weighted least squares.
Finally, it is important to note~\citet{huang2006correcting}, where a core Lemma of our proofs is proposed.

\section{Aggregation by Importance Weighted Least Squares}
\label{sec:summary}

This section gives a summary of the main problem of this paper and our approach. For detailed assumptions and proofs, we refer to Section~\ref{sec:method} of the Supplementary Material.

\paragraph{Notation and Setup}
Let $\X\subset\mathbb{R}^{d_1}$ be a compact \textit{input space} and $\Y\subset\mathbb{R}^{d_2}$ be a compact \textit{label space} with inner product $\innerpro{.,.}_\Y$ such that for the associated norm $\norm{y}_\Y \le y_0$ holds for all $y\in\Y$ and some $y_0>0$.
Following~\citet{ben2010theory}, we consider two datasets: A \textit{source dataset} $(\x,\y)=((x_1,y_1),\ldots,(x_n,y_n))\in\left(\X\times\Y\right)^n$ independently drawn according to some source distribution (probability measure) $p$ on $\X\times\Y$ and an unlabeled \textit{target} dataset $\x'=(x_1',\ldots,x_m')\in\X^m$ with elements independently drawn according to the marginal distribution\footnote{The existence of the conditional probability density $q(y|x)$ with $q(x,y)=q(y|x) q_\mathcal{X}(x)$ is guaranteed by the fact that $\X \times \Y$ is Polish, i.e., a separable and complete metric space, c.f. \citet[Theorem 10.2.2.]{dudley2002real}.} $q_\X$ of some target distribution $q$ on $\X\times\Y$.
The marginal distribution of $p$ on $\X$ is analogously denoted as $p_{\X}$.
We further denote by $\mathcal{R}_q(f)=\int_{\X\times\Y} \norm{f(x)-y}_{\Y}^2\diff q(x,y)$ the \textit{expected target risk} of a vector valued function $f:\X\to\Y$ w.r.t.~the least squares loss.

\paragraph{Problem}
Given a set $f_1,\ldots,f_l:\X\to\Y$ of models, the labeled source sample $(\x,\y)$ and the unlabeled target sample $\x'$, the problem considered in this work is to find a model $f:\X\to\Y$ with a minimal target error $\mathcal{R}_q(f)$.

\paragraph{Main Assumptions}
We rely (a) on the \textit{covariate shift} assumption that the source conditional distribution $p(y|x)$ equals the target conditional distribution $q(y|x)$, and, (b) on the \textit{bounded density ratio} assumption that there is a function $\beta:\X\to[0,B]$ with $B>0$ such that $\diff q_\X(x)=\beta(x)\diff p_\X(x)$.

\paragraph{Approach}
Our goal is to compute the linear aggregation $f=\sum_{i=1}^l c_i f_i$ for ${c_1,\ldots,c_l\in\mathbb{R}}$ with minimal squared target risk $\mathcal{R}_q\left(\sum_{i=1}^l c_i f_i\right)$.
Our approach relies on the fact that
\begin{align}
\label{eq:objective_wo_assumptions}
    \argmin_{c_1,\ldots,c_l\in\mathbb{R}} \mathcal{R}_q\left(\sum_{i=1}^l c_i f_i\right) =
    \argmin_{c_1,\ldots,c_l\in\mathbb{R}} \int_{\X} \norm{\sum_{i=1}^l c_i f_i(x)-f_q(x)}_{\Y}^2\diff q_\X(x)
\end{align}
for the \textit{regression function} $f_q(x)=\int_{\Y} y\diff q(y|x)$\footnote{$\Y$-valued integrals are defined in the sense of Lebesgue-Bochner.}, see e.g.~\citet[Proposition~1]{cucker2002mathematical}.
Unfortunately, the right hand side of Eq.~\eqref{eq:objective_wo_assumptions} contains information about labels $f_q(x)$ which are not given in our setting of unsupervised domain adaptation.
However, borrowing an idea from \textit{importance sampling}, it is possible to estimate Eq.~\eqref{eq:objective_wo_assumptions}. More precisely, from the covariate shift assumption we get $f_p(x)=\int_{\Y} y\diff p(y|x)=f_q(x)$ and we can use the bounded density ratio $\beta$ to obtain
\begin{align}
\label{eq:core_representation}
    \argmin_{c_1,\ldots,c_l\in\mathbb{R}} \mathcal{R}_q\left(\sum_{i=1}^l c_i f_i\right) = \argmin_{c_1,\ldots,c_l\in\mathbb{R}} \int_{\X} \beta(x) \norm{\sum_{i=1}^l c_i f_i(x)-f_p(x)}_{\Y}^2\diff p_\X(x)
\end{align}
which extends importance weighted least squares~\citep{shimodaira2000improving,kanamori2009least} to linear aggregations $\sum_{i=1}^l c_i f_i$ of vector-valued functions $f_1,\ldots,f_l$.
The unique minimizer of Eq.~\eqref{eq:core_representation} can be approximated based on available data analogously to classical least squares estimation as detailed in Algorithm~\ref{algo:IWLSLA}.
In the following, we call Algorithm~\ref{algo:IWLSLA} Importance Weighted Least Squares Linear Aggregation (IWA).

\new{
\paragraph{Relation to Model Selection}
The optimal aggregation $f^*:=\argmin_{c_1,\ldots,c_l\in\mathbb{R}} \mathcal{R}_q\left(\sum_{i=1}^l c_i f_i\right)$ defined in Eq.~\eqref{eq:core_representation} is clearly better than any single model selection since
\begin{align}
    \label{eq:relation_to_selection}
    \mathcal{R}_q(f^*)=\min_{c_1,\ldots,c_l\in\mathbb{R}} \mathcal{R}_q\left(\sum_{i=1}^l c_i f_i\right)\leq \min_{c_1,\ldots,c_l\in\{0,1\}} \mathcal{R}_q\left(\sum_{i=1}^l c_i f_i\right)\leq \min_{f_1,\ldots,f_l} \mathcal{R}_q(f_i).
\end{align}
However, the optimal aggregation $f^*$ cannot be computed based on finite datasets and
the next logical questions are about the accuracy of the approximation $\widetilde{f}$ in Algorithm~\ref{algo:IWLSLA}.}

\vspace{-8pt}
\begin{algorithm}[ht]
\SetKwInOut{Input}{Input}
\SetKwInOut{Output}{Output}
\SetKwInOut{Return}{Return}
\SetKwInOut{Initialization}{Initialization}
\SetKw{stepone}{Step 1}
\SetKw{steptwo}{Step 2}
\SetAlgoLined
\Input{Set $f_1,\ldots,f_l:\X\to\Y$ of models, labeled source sample $(\x,\y)$ and unlabeled target sample $\x'$.}
\Output{Linear aggregation $\tf=\sum_{i=1}^l \tc_i f_i$ with weights $ \tc=(\tc_1,\ldots,\tc_l)\in\mathbb{R}^l$.}
~\\
\stepone{Use unlabeled samples $\x$ and $\x'$ to approximate density ratio $\frac{\diff q_{\X}}{\diff p_{X}}$ by some function $\beta:\X\to[0,B]$ using a classical algorithm, e.g.~\citet{sugiyama2012density}. \textcolor{blue}}
\\\vspace{4pt}
\steptwo{Compute weight vector
$
     \tc = \tG^{-1} \tg
$
with empirical Gram matrix $\tG$ and vector $\tg$ defined by
\begin{align*}
    \tG=\left(\frac{1}{m}\sum_{k=1}^m \left\langle f_i(x_k'), f_j(x_k')\right\rangle_{\Y}\right)_{i,j=1}^l\quad\quad
    \tg=\left(\frac{1}{n}\sum_{k=1}^n \beta(x_k) \left\langle y_k, f_i(x_k)\right\rangle_{\Y}  \right)_{i=1}^l.
\end{align*}
}\\
~\\
 \Return{Linear aggregation $\tf=\sum_{i=1}^l \tc_i f_i$.}
 \caption{Importance Weighted Least Squares Linear Aggregation (IWA).}
 \label{algo:IWLSLA}
\end{algorithm}
\vspace{-10pt}

\section{Target Error Bound for Algorithm~\ref{algo:IWLSLA}}
\label{sec:theory}
\vspace{-7pt}
Let us start by introducing some further notation: 
$L^2(p)$ refers to the Lebesgue-Bochner space of functions from $\X$ to $\Y$, associated to a measure $p$ on $\X$ with corresponding inner product $\innerpro{.,.}_{L^2(p)}$ (this space basically consists of all $\Y$-valued functions whose $\Y$-norms are square integrable with respect to the given measure $p$). Moreover, let us introduce the (positive semi-definite) Gram matrix $G = \left(  \innerpro{  f_i , f_j }_{L^2(q_{\X})} \right)_{i,j=1}^l$ and the vector $\overline{g} = \left(\innerpro{ \beta f_p ,  f_i }_{L^2(p_{\X})} \right)_{i=1}^l$.
We can assume that $G$ is invertible (and thus positive definite), since otherwise some models are too similar to others and can be withdrawn from consideration (see Section~\ref{sec:appendix_experiments}).
Next, we recall that the minimizer of Eq.~\eqref{eq:core_representation} is $c^*=(c_1^*,\ldots,c_l^*)=G^{-1} \overline{g}$, see Lemma~\ref{lem:least_squares_sol}.



However, neither $G$ nor the vector $\overline{g}$ is accessible in practice, because there is no access to the target measure $q_{\X}$. Driven by the law of large numbers we try to approximate them by averages over our given data and therefore arrive at the formulas for $\tG$ and $\tg$ given in Algorithm~\ref{algo:IWLSLA}. This leads to the approximation $\tf$.
 Up to this point, we were only considering an intuitive perspective on the problem setting, therefore, we will now formally discuss statements on the distance between the model $\tf$ and the optimal linear model $f^*=\sum_{i=1}^l c_i^* f_i$, measured in terms of target risks, and how this distance behaves with increasing sample sizes.
This is what we attempt with our main result:

\begin{thm}
\label{thm:main}
With probability $1-\delta$ it holds that
\begin{align}
&\mathcal{R}_q (\tilde{f}) - \mathcal{R}_q (f_q)
\leq 2  \left(\mathcal{R}_q \left(f^* \right) - \mathcal{R}_q (f_q) \right)
+C\left( \log \frac{1}{\delta} \right)  (n^{-1} +  m^{{-1}}) 
\label{eq:main_gen_bound}
\end{align}
for some coefficient $C>0$ not depending on $m,n$ and $\delta$, and sufficiently large $m$ and $n$.
\end{thm}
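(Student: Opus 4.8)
The plan is to control the difference $\mathcal{R}_q(\tf)-\mathcal{R}_q(f_q)$ by comparing the empirical weight vector $\tc=\tG^{-1}\tg$ to the optimal $c^*=G^{-1}\overline{g}$, and then translating the Euclidean deviation $\norm{\tc-c^*}$ into a deviation of target risks. First I would record the bias-variance style decomposition specific to the quadratic loss: for any coefficient vector $c$, writing $f_c=\sum_i c_i f_i$, one has $\mathcal{R}_q(f_c)-\mathcal{R}_q(f_q)=\norm{f_c-f_q}_{L^2(q_\X)}^2$ which, after adding and subtracting $f^*$ and using that $f^*$ is the $L^2(q_\X)$-projection of $f_q$ onto $\mathrm{span}\{f_1,\dots,f_l\}$ (so the cross term vanishes by Lemma~\ref{lem:least_squares_sol}), gives
\begin{align*}
\mathcal{R}_q(f_c)-\mathcal{R}_q(f_q)=\big(\mathcal{R}_q(f^*)-\mathcal{R}_q(f_q)\big)+\innerpro{G(c-c^*),c-c^*}.
\end{align*}
Applied with $c=\tc$, this already produces the structure of Eq.~\eqref{eq:main_gen_bound}: the factor-two term will come from splitting $\innerpro{G(\tc-c^*),\tc-c^*}$ against the first term, and the remainder $C(\log\frac1\delta)(n^{-1}+m^{-1})$ must dominate $\innerpro{G(\tc-c^*),\tc-c^*}$, i.e.\ it suffices to show $\norm{\tc-c^*}^2 = O\big((\log\frac1\delta)(n^{-1}+m^{-1})\big)$ with probability $1-\delta$. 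Actually the clean way to get exactly "twice" is: since the quantity $\innerpro{G(\tc-c^*),\tc-c^*}$ is nonnegative, it is trivially bounded by $\mathcal{R}_q(f^*)-\mathcal{R}_q(f_q)$ once it is small enough — but that comparison only holds if $\mathcal{R}_q(f^*)>\mathcal{R}_q(f_q)$; the genuinely uniform statement is obtained by bounding $\innerpro{G(\tc-c^*),\tc-c^*}\le\norm{G}\,\norm{\tc-c^*}^2$ and absorbing everything into the $C(\cdot)(n^{-1}+m^{-1})$ term, with the "$2$" being a convenient slack. I would present it as: $\mathcal{R}_q(\tf)-\mathcal{R}_q(f_q)\le\big(\mathcal{R}_q(f^*)-\mathcal{R}_q(f_q)\big)+\norm{G}\norm{\tc-c^*}^2$ and then note $\norm{G}\norm{\tc-c^*}^2\le\big(\mathcal{R}_q(f^*)-\mathcal{R}_q(f_q)\big)+C(\log\frac1\delta)(n^{-1}+m^{-1})$ once $\norm{\tc-c^*}$ is small, which holds for $m,n$ large.

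Next I would establish the perturbation bound $\norm{\tc-c^*}$. Writing $\tc-c^*=\tG^{-1}\tg-G^{-1}\overline{g}$, the standard resolvent identity gives
\begin{align*}
\tc-c^* = \tG^{-1}(\tg-\overline{g}) + \tG^{-1}(G-\tG)G^{-1}\overline{g} = \tG^{-1}(\tg-\overline{g}) + \tG^{-1}(G-\tG)c^*,
\end{align*}
so it reduces to (i) bounding the two "noise" quantities $\norm{\tg-\overline{g}}$ and $\norm{\tG-G}$ in probability, and (ii) showing $\norm{\tG^{-1}}$ is bounded with high probability (which follows from $\norm{\tG-G}$ being small together with invertibility of $G$, via a Neumann-series / Weyl-inequality argument: if $\norm{\tG-G}\le\frac12\norm{G^{-1}}^{-1}$ then $\norm{\tG^{-1}}\le 2\norm{G^{-1}}$). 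For (i), each entry of $\tG$ is an i.i.d.\ average of the bounded random variables $\innerpro{f_i(x_k'),f_j(x_k')}_\Y$ over the $m$ target points (bounded because $\X$ is compact and the $f_i$ are, implicitly, continuous — this boundedness is presumably part of the detailed assumptions referenced in Section~\ref{sec:method}), and each entry of $\tg$ is an i.i.d.\ average of the bounded random variables $\beta(x_k)\innerpro{y_k,f_i(x_k)}_\Y$ over the $n$ source points, bounded by $B\,y_0\,\sup_i\sup_x\norm{f_i(x)}_\Y$. Hoeffding's inequality (or a Bernstein/Hoeffding bound in Hilbert space, cf.\ the tools from \citet{caponnetto2007optimal,caponnetto2005risk} and the core Lemma attributed to \citet{huang2006correcting}) then yields, with probability $1-\delta$, $\norm{\tG-G}\le c_1\sqrt{\log(1/\delta)/m}$ and $\norm{\tg-\overline g}\le c_2\sqrt{\log(1/\delta)/n}$; a union bound over the finitely many ($l$, resp.\ $l^2$) coordinates keeps the $\log(1/\delta)$ dependence. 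Crucially, one uses $\mathbb{E}[\tG]=G$ — which holds because $x_k'\sim q_\X$ — and $\mathbb{E}[\tg]=\overline g$ — which holds because $\mathbb{E}[\beta(x)\innerpro{y,f_i(x)}_\Y]$ over $p$ equals $\innerpro{\beta f_p,f_i}_{L^2(p_\X)}$ by the covariate-shift identity $f_p=f_q$ and the tower property; this is exactly where the two main assumptions enter.

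Combining, on the intersection of the two good events (probability $\ge 1-2\delta$, rescale $\delta$) we get $\norm{\tc-c^*}\le 2\norm{G^{-1}}\big(\norm{\tg-\overline g}+\norm{\tG-G}\,\norm{c^*}\big)$, hence $\norm{\tc-c^*}^2\le C'\log(1/\delta)(n^{-1}+m^{-1})$ using $(a+b)^2\le 2a^2+2b^2$; feeding this into the loss decomposition of the first paragraph finishes the proof, with the "sufficiently large $m,n$" hypothesis being precisely what makes $\norm{\tG-G}$ small enough to invoke the Neumann-series bound on $\norm{\tG^{-1}}$ and to keep the quadratic term inside the $(n^{-1}+m^{-1})$ rate. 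The main obstacle I anticipate is not any single estimate but getting the constants to line up so the leading coefficient is exactly $2$ rather than, say, $1+\varepsilon$ or $3$: this forces the argument to route the high-probability smallness of $\norm{\tc-c^*}$ through the comparison $\norm{G}\norm{\tc-c^*}^2\le(\mathcal{R}_q(f^*)-\mathcal{R}_q(f_q))+\text{remainder}$ rather than bounding it by the remainder alone, and one has to be a little careful in the degenerate case $\mathcal{R}_q(f^*)=\mathcal{R}_q(f_q)$ (then $f_q$ is itself a linear aggregation and $c^*$-estimation is an ordinary parametric problem, so the bound holds with the remainder term alone). A secondary technical point is justifying the Hilbert-space concentration for the $\Y$-valued averages rather than working coordinate-wise; invoking the vector-valued Hoeffding/Bernstein inequalities from \citet{caponnetto2007optimal} handles this cleanly.
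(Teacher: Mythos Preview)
Your proposal is correct and in fact proves something slightly stronger than the paper does. The key difference is in the excess-risk decomposition. You correctly observe that $f^*$ is the $L^2(q_\X)$-orthogonal projection of $f_q$ onto $\mathrm{span}\{f_1,\dots,f_l\}$, so the Pythagorean identity gives exactly
\[
\mathcal{R}_q(\tf)-\mathcal{R}_q(f_q)=\big(\mathcal{R}_q(f^*)-\mathcal{R}_q(f_q)\big)+\innerpro{G(\tc-c^*),\tc-c^*},
\]
with no cross term. Bounding the second summand by $\norm{G}\,\norm{\tc-c^*}^2\le C(\log\tfrac{1}{\delta})(n^{-1}+m^{-1})$ then yields the theorem with leading constant $1$ rather than $2$. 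So the discussion you devote to ``getting exactly $2$'' is unnecessary: your bound is simply tighter, and the factor $2$ follows trivially since $\mathcal{R}_q(f^*)-\mathcal{R}_q(f_q)\ge 0$. The paper, by contrast, does \emph{not} exploit orthogonality; it applies the triangle inequality in $\norm{\cdot}_{L^2(q_\X)}$ and then $(a+b)^2\le 2a^2+2b^2$, which is exactly where the $2$ originates.

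Two further differences worth noting. First, the paper writes the perturbation identity with $G^{-1}$ and $\tc$ in the roles where you place $\tG^{-1}$ and $c^*$ (both versions are valid; yours gives a deterministic $\norm{c^*}$ in the constant rather than the data-dependent $\norm{\tc}$ the paper carries). Second, the paper routes all concentration through the vector-valued RKHS machinery of Caponnetto--De Vito (the operators $T_x$, $T_{q_\X}$, the Pinelis inequality, and the lemma of \citet{huang2006correcting}), whereas your direct entrywise Hoeffding argument on the bounded scalars $\innerpro{f_i(x'),f_j(x')}_\Y$ and $\beta(x)\innerpro{y,f_i(x)}_\Y$ is more elementary and equally sufficient here, since $G,\tG,\bar g,\tg$ are finite-dimensional objects. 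The RKHS framework in the paper is largely a modeling device to guarantee boundedness and to connect to prior work; your parenthetical that continuity of the $f_i$ on compact $\X$ suffices is accurate.
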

Before we give an outline of the proof (see Section~\ref{sec:method}), let us briefly comment on the main message of Algorithm~1.
Observe, that~\cite[Proposition~1]{cucker2002mathematical} $\mathcal{R}_q (f)-\mathcal{R}_q (f_q)=\norm{f-f_q}_{L^2(q_{\X})}^2$ can be interpreted as the total target error made by Algorithm~\ref{algo:IWLSLA}, sometimes called \textit{excess risk}.
Indeed, in the deterministic setting of labeling functions, $f_q$ equals the target labeling function and the excess risk equals the target error of~\citet{ben2010theory}.
Eq.~\eqref{eq:main_gen_bound} compares this error for the aggregation $\tf$, computed by Algorithm~\ref{algo:IWLSLA}, to the error for the optimal aggregation $f^*$.
Note that the error of the optimal aggregation $f^*$ is unavoidable in the sense that it is determined by the decision of searching for linear aggregations of $f_1,\ldots,f_l$ only.
However, if the models $f_1,\ldots,f_l$ are sufficiently different, then this error can be expected to be small.
Theorem~\ref{thm:main} tells us that the error of $\tf$ approaches the one of $f^*$ with increasing target and source sample size.
The rate of convergence is at least linear. 
Finally, we emphasize that Theorem~1 does not take into account the error of the density-ratio estimation.
We refer to the recent work~\cite{gizewski2022regularization}, who, for the first time, included such error in the analysis of importance weighted least squares.

Let us now give a brief outline for the proof of Theorem~\ref{thm:main}.
One key part concerns the existence of a Hilbert space $\Hilbert$ with associated inner product $\innerpro{.,.}_{\Hilbert}$ (a reproducing kernel space of functions from $\X \to \Y$) which contains all given models $f_1,\ldots,f_l$ and the regression function $f_q=f_p$. The space $\Hilbert$ can be constructed from any given models that are bounded and continuous functions.
Furthermore,  Algorithm~\ref{algo:IWLSLA} does not need any knowledge of $\Hilbert$, which is a modeling assumption only needed for the proofs, so that we can apply many arguments developed in~\citet{caponnetto2007optimal, caponnetto2005risk}.
$\Hilbert$ is also not necessarily generated by a prescribed kernel such as Gaussian or linear kernel, and, no further smoothness assumption is required, see Sections~\ref{sec:method} and \ref{sec:spaces} in the Supplementary Material.

Moreover, in this setting one can express the excess risk as follows: $\mathcal{R}_q \left(f \right) - \mathcal{R}_q (f_q)=\norm{A(f-f_q)}_{\Hilbert}^2$ for some bounded linear operator $A: \Hilbert \to \Hilbert$. 
This also allows us to formulate the entries of $G$ and $\bar{g}$ in terms of the inner product $\innerpro{.,.}_{\Hilbert}$ instead. Using properties related to the operators that appear in the construction of $\Hilbert$, in combination with Hoeffding-like concentration bounds in Hilbert spaces and bounds that measure, e.g., the deviation between empirical averages in source and target domain (as done in \citet[Lemma 4]{gretton2006kernel}), we can quantify differences between the entries of $G$ and $\tG$ (and $\bar{g}$ and $\tg$ respectively) in terms of $n$, $m$ and  $\delta$. This leads to Eq.~\eqref{eq:main_gen_bound}. 
\vspace{-5pt}
\section{Empirical Evaluations} \label{sec:empirical_main}
\vspace{0pt}
\begin{wrapfigure}[32]{r}{0.55\textwidth} 
    \vspace{7pt}
    \centering
    \vspace{-26pt}
    \center{\includegraphics[width=\linewidth]{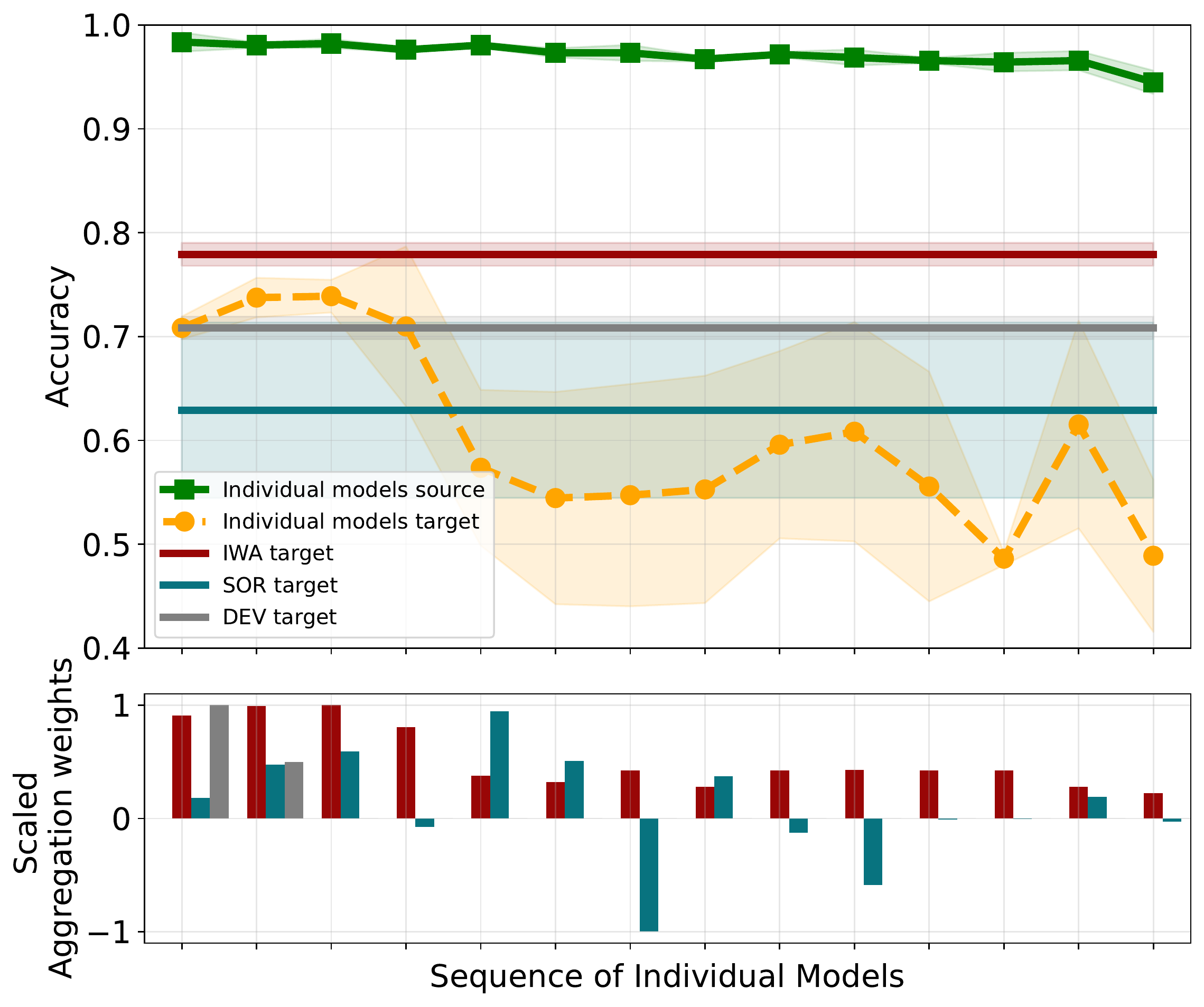}}
    \caption{Top: Mean classification accuracy (y-axis) of our method (IWA), source-only regression (SOR), deep embedded validation (DEV) and individual models (green: source accuracy, orange: target accuracy) used in the aggregation for the HHAR dataset \citep{Stisen:15} over 3 seeds. The individual models (x-axis) are trained with DIRT \citep{Shu2018dirt} for different hyper-parameter choices. Bottom: Scaled Aggregation weights (y-axis) for individual models (x-axis) computed by IWA, SOR and DEV (average over 3 seeds). Instead of searching for the best model in the sequence, IWA effectively uses all models in the sequence and obtains a performance not reachable by any procedure selecting only one model.}
    \label{fig:iwa_acc_ew}
\end{wrapfigure}
\vspace{-5pt}
We now empirically evaluate the performance of our approach compared to classical ensemble learning baselines and state-of-the-art model selection methods.
Therefore, we structure our empirical evaluation as follows. 
First, we outline our experimental setup for unsupervised domain adaptation and introduce all domain adaptation methods for our analysis. 
Second, we describe the ensemble learning and model selection baselines, and third, we present the datasets used for our experiments.
We then conclude with our results and a detailed discussion thereof. 
\vspace{-7pt}
\subsection{Experimental Setup} 
\vspace{-8pt}
\label{sec:experimental_setup}
To assess the performance of our ensemble learning Algorithm~\ref{algo:IWLSLA} IWA, we perform numerous experiments with different domain adaptation algorithms on different datasets.
By changing the hyper-parameters of each algorithm, we obtain, as results of applying these algorithms, sequences of models.
The goal of our method is to find optimal models based on combinations of candidates from each sequence.
As domain adaptation algorithms, we consider the AdaTime benchmark suite, and run our experiments on language, image, text and time-series data. 
This suite comprises a collection of $11$ domain adaptation algorithms. We follow their evaluation setup and apply the following algorithms: Adversarial Spectral Kernel Matching (AdvSKM) \citep{Liu2021kernel}, Deep Domain Confusion (DDC) \citep{tzeng2014deep}, Correlation Alignment via Deep Neural Networks (Deep-Coral) \citep{Sun2017correlation}, Central Moment Discrepancy (CMD) \citep{zellinger2017central}, Higher-order Moment Matching (HoMM) \citep{Chen2020moment}, Minimum Discrepancy Estimation for Deep Domain Adaptation (MMDA) \citep{Rahman2020}, Deep Subdomain Adaptation (DSAN) \citep{Zhu2021subdomain}, Domain-Adversarial Neural Networks (DANN) \citep{ganin2016domain}, Conditional Adversarial Domain Adaptation (CDAN) \citep{Long2018conditional}, A DIRT-T Approach to Unsupervised Domain Adaptation (DIRT) \citep{Shu2018dirt} and Convolutional deep Domain Adaptation model for Time-Series data (CoDATS) \citep{Wilson2020adaptation}.
In addition to the sequence of models, IWA requires an estimate of the density ratio between source and target domain.
To compute this quantity we follow~\citep{bickel2007discriminative} and~\cite[Section~4.3]{you2019towards}, and, train a classifier discriminating between source and target data.
The output of this classifier is then used to approximate the density ratio denoted as $\beta$ in Algorithm~\ref{algo:IWLSLA}.
Overall, to compute the results in our tables we trained $16680$ models over approximately a timeframe of $1500$ GPU/hours using computation resources of NVIDIA~P100~$16$GB~GPUs.

\new{For example, consider the top plot of Figure~\ref{fig:iwa_acc_ew}, where we compare the performance of Algorithm~\ref{algo:IWLSLA} to deep embedded validation  (DEV) \citep{you2019towards}, a heuristics baseline source-only regression (SOR, see Section~\ref{sec:ensemble_learning_baselines}) and each individual model in the sequence. The bottom plot shows the the scaled aggregation weights, i.e. how much each individual model contributes to the aggregated prediction of IWA, DEV, and SOR.}
In this example, the given sequence of models is obtained from applying the algorithm proposed in~\citet{Shu2018dirt} with different hyper-parameter choices to the Heterogeneity Human Activity Recognition dataset \citep{Stisen:15}.
See Section~\ref{sec:model_sequence} in the Supplementary Material for the exact hyper-parameter values.
\vspace{-5pt}
\subsection{Baselines}
\vspace{-7pt}
\label{sec:ensemble_learning_baselines}
As representatives for the most prominent methods discussed in Section~\ref{sec:introduction}, we compare our method, IWA, to ensemble learning methods that use linear regression and majority voting as \textit{heuristic} for model aggregation, and, model selection methods with \textit{theoretical error guarantees}.

\textbf{Heuristic Baselines}
The first baseline is majority voting on target data (TMV).
It aggregates the predictions of all models by counting the overall class predictions and selects the class with the maximum prediction count as ensemble output.
In addition, we implement three heuristic baselines which aggregate the vector-valued output, i.e.~probabilities, of all classifiers using weights learned via linear regression.
The final ensemble prediction is then made by selecting the class with the highest probability.
The three heuristic regression baselines differ in the input used for the performed regression.
Source-only regression (SOR) trains a regression model on classifier predictions (of the given models) and labels from the source domain only.
Target majority voting regression (TMR) uses the same voting procedure as explained above to generate pseudo-labels on the target domain, which are then further used to train a linear regression model.
In contrast, target confidence average regression (TCR) selects the highest average class probability over all classifiers to pseudo-label the target samples, which is then used for training the linear regression model.

\textbf{Baselines with Theoretical Error Guarantees}
We compare IWA to the model selection methods importance weighted validation (IWV) \citep{sugiyama2007covariate} and deep embedded validation (DEV) \citep{you2019towards}, which select models according to their (importance weighted) target risk. 
Both methods assume the knowledge of an estimated density ratio between target and source domains.
In our experiments we follow~\citet{bickel2007discriminative,you2019towards} and estimate this ratio, by using a classifier that discriminates between source and target domain (see Supplementary Material Section \ref{sec:appendix_experiments} for more details). 

\vspace{-7pt}
\subsection{Datasets}
\vspace{-7pt}
\label{sec:datasets}

We evaluate the previously mentioned methods according to a diverse set of datasets, including language, image and time-series data. 
All datasets have a train, evaluation and test split, with results only presented on the held-out test sets.
For additional details we refer to Appendix~\ref{sec:datasets} and \ref{sec:appendix_experiments}. 

 \textbf{TransformedMoons} This specific form of twinning moons is based on~\cite{zellinger2021balancing}. The source domain consists of two-dimensional input data points and their transformations to two opposing moon-shaped forms.

\textbf{MiniDomainNet} is a reduced version of DomainNet-2019~\citep{peng2019moment} consisting of six different image domains (Quickdraw, Real, Clipart, Sketch, Infograph, and Painting).
    In particular, MiniDomainNet~\citep{zellinger2021balancing} reduces the number of classes of DomainNet-2019 to the top-five largest representatives in the training set of each class across all six domains.

\textbf{AmazonReviews} is based on~\citet{blitzer2006domain} and consists of text reviews from four domains: books, DVDs, electronics, and kitchen appliances.
    Reviews are encoded in feature vectors
    of bag-of-words unigrams and bigrams with binary labels indicating the rankings.
    From the four categories we obtain twelve
    domain adaptation tasks where each category serves once as source domain and once as target domain.

\textbf{UCI-HAR} The \emph{Human Activity Recognition} \citep{Anguita:13} dataset from the UC Irvine Repository contains data from three motion sensors (accelerometer, gyroscope and body-worn sensors) gathered using smartphones from $30$ different subjects.
    It classifies their activities in several categories, namely, walking, walking upstairs, downstairs, standing, sitting, and lying down.
    
\textbf{WISDM} \citep{Kwapisz:11} is a class-imbalanced dataset variant from collected accelerometer sensors, including GPS data, from $29$ different subjects which are performing similar activities as in the UCI-HAR dataset.

\textbf{HHAR} The \emph{Heterogeneity Human Activity Recognition} \citep{Stisen:15} dataset investigate \mbox{sensor-,} device- and workload-specific heterogeneities using $36$ smartphones and smartwatches, consisting of $13$ different device models from four manufacturers.
    
 \textbf{Sleep-EDF} The \emph{Sleep Stage Classification} time-series setting aims to classify the electroencephalography (EEG) signals into five stages i.e., Wake (W), Non-Rapid Eye Movement stages (N1, N2, N3), and Rapid Eye Movement (REM).
    Analogous to \citet{Ragab:22, Eldele:21}, we adopt the Sleep-EDF-20 dataset obtained from PhysioBank \citep{Goldberger:00}, which contains EEG readings from $20$ healthy subjects.

We rely on the AdaTime benchmark suite \citep{Ragab:22} in most evaluations. The four time-series datasets above are originally included there. We extend AdaTime to support the other discussed datasets as well, and extend its domain adaptation methods. 

\vspace{-5pt}
\subsection{Results}
\vspace{-7pt}
We separate the applied methods into two groups, namely \emph{heuristic} and methods with \emph{theoretical error guarantees}.
All tables show accuracies of source-only (SO) and target-best (TB) models, where source-only denotes training without domain adaptation and target-best the best performing model obtained among all parameter settings.
We highlight in bold the performance of the best performing method with theoretical error guarantees, and in italic the best performing heuristic.
See Table~\ref{tab:adatime_overview_1} for results. Please find the full tables in the Supplementary Material Section \ref{sec:appendix_experiments}.

\textbf{Outperformance of theoretically justified methods:}
On all datasets, our method outperforms IWV and DEV, setting a new state of the art for solving parameter choice issues under theoretical guarantees.

\textbf{Outperformance of heuristics:}
It is interesting to note that each heuristic outperforms IWV and DEV on at least five of seven datasets.
Moreover, every heuristic outperforms the (average) target best model (TB) in at least two cases, making it impossible for \textit{any} model selection method to win in these cases.
These facts highlight the quality of the predictions of our chosen heuristics.
However, each heuristic is outperformed by our method on at least five of seven datasets.

\textbf{Information in aggregation weights and robustness w.r.t.~inaccurate models:}
It is interesting to observe that, in contrast to the other heuristic aggregation baselines, the aggregation weights $c_1,\ldots,c_l$ of our method tend to be larger for accurate models, see Section~\ref{sec:correlation_analysis}.
Another result is that our method tends to be less sensitive to a high number of inaccurate models than the baselines, see Section~\ref{sec:sensitivity_analysis}. This serves as another reason for its high empirical performance.

\section{Conclusion and Future Work}
\vspace{-7pt}
We present a constructive theory-based method for approaching parameter choice issues in the setting of unsupervised domain adaptation.
Its theoretical approach relies on the extension of weighted least squares to vector-valued functions.
The resulting aggregation method distinguishes itself by a wide scope of admissible model classes without strong assumptions, e.g. support vector machines, decision trees and neural networks.
A broad empirical comparative study on benchmark datasets for language, images, body sensor signals and handy signals, underpins the theory-based optimality claim.
It is left for future research to further refine the theory and its estimates, e.g., by exploiting concentration bounds from~\citet{gretton2006kernel} or advanced density ratio estimators from~\citet{sugiyama2012density}.

\subfile{tables/adatime_merged_table_1}

\newpage

\subsection*{Acknowledgments}
The ELLIS Unit Linz, the LIT AI Lab, and the Institute for Machine Learning are supported by the Federal State Upper Austria. IARAI is supported by Here Technologies. We thank the projects AI-MOTION (LIT-2018-6-YOU-212), AI-SNN (LIT-2018-6-YOU-214), DeepFlood (LIT-2019-8-YOU-213), Medical Cognitive Computing Center (MC3), INCONTROL-RL (FFG-881064), PRIMAL (FFG-873979), S3AI (FFG-872172), DL for GranularFlow (FFG-871302), AIRI FG 9-N (FWF-36284, FWF-36235), and ELISE (H2020-ICT-2019-3 ID: 951847). We further thank Audi.JKU Deep Learning Center, TGW LOGISTICS GROUP GMBH, Silicon Austria Labs (SAL), FILL GmbH, Anyline GmbH, Google, ZF Friedrichshafen AG, Robert Bosch GmbH, UCB Biopharma SRL, Merck Healthcare KGaA, Verbund AG, T\"{U}V Austria, Frauscher Sensonic, and the NVIDIA Corporation.
The research reported in this paper has been funded by the Federal Ministry for Climate Action, Environment, Energy, Mobility, Innovation and Technology (BMK), the Federal Ministry for Digital and Economic Affairs (BMDW), and the Province of Upper Austria in the frame of the COMET--Competence Centers for Excellent Technologies Programme and the COMET Module S3AI managed by the Austrian Research Promotion Agency FFG.

\bibliographystyle{iclr2023_conference}
\bibliography{aggregation}
\newpage
\appendix

\section{Notation and Proof of Main Result}
\label{sec:method}
The aim of this section is to give a full proof of our main result, Theorem \ref{thm:main} in the main paper.
We start by introducing and summarizing the notation and the required concepts from functional analysis and measure theory, so that we can state and prove the required lemmas.
\paragraph{Summary of Notation}
\begin{itemize}
\item \textit{Spaces:} input space $\X\subset\mathbb{R}^{d_1}$ and label space $\Y$ with inner product $\innerpro{.,.}_{\Y}$. $\Y$ is assumed to be a separable Hilbert space such that for the associated norm $\norm{y}_\Y \le y_0$ holds for all $y\in\Y$ and some $y_0>0$. Note that this setting is more general than the one from the main text, where we assumed $\Y\subset\mathbb{R}^{d_2}$ (the simplification in the main text improves readability and respectation of space limits).
\item \textit{Datasets and Distributions:} Source data set: $(\x,\y)=((x_1,y_1),\ldots,(x_n,y_n))\in\left(\X\times\Y\right)^n$ independently drawn according to source distribution $p$ on $\X\times\Y$ and an unlabeled \textit{target} dataset $\x'=(x_1',\ldots,x_m')\in\X^m$ independently drawn according marginal distribution $q_\X$ of target distribution $q$ on $\X\times\Y$ (the corresponding marginal distribution of $p$ on $\X$ is similarly denoted as $p_{\X}$).
\item \textit{Source Risk:} $\mathcal{R}_p(f)=\int_{\X\times\Y} \norm{f(x)-y}_{\Y}^2\diff p(x,y)$.
\item \textit{Source Regression function} $f_p(x)=\int_{\Y} y\diff p(y|x)$. (Vector valued) integral in the sense of Lebesgue-Bochner.
\item \textit{Target Risk:} $\mathcal{R}_q(f)=\int_{\X\times\Y} \norm{f(x)-y}_{\Y}^2\diff q(x,y)$ 
\item \textit{Target Regression function} $f_q(x)=\int_{\Y} y\diff q(y|x)$. (Vector valued) integral in the sense of Lebesgue-Bochner.
\end{itemize}
\paragraph{Problem}
\begin{itemize}
\item \textit{Given:} sequence $f_1,\ldots,f_l:\X\to\Y$ of models, source sample $(\x,\y)$ and unlabeled target sample $\x'$
\item \textit{Aim:} find aggretation $f=\sum_{i=1}^l c_i f_i$ with minimal $\mathcal{R}_q(f)$.
\end{itemize}
\paragraph{Main Assumptions}
\begin{itemize}
\item \textit{covariate shift:} $p(y|x)=q(y|x)$ and thus $f_p=f_q$. 
\item \textit{bounded density ratio:} there is $\beta:\X\to[0,B]$ such that $\diff q_\X(x)=\beta(x)\diff p_\X(x)$.
\end{itemize}
Existence of the associated conditional probability measures is guaranteed by the fact that $\X \times \Y$ is Polish (a separable and complete metric space), c.f. \citet[Theorem 10.2.2.]{dudley2002real}.
\paragraph{Notation from functional analysis/operator theory}
Let $\mathbf{U}$ and $\mathbf{V}$ denote separable Hilbert spaces (i.e. they admit countable orthonormal bases) with associated inner products $\innerpro{.,.}_{\mathbf{U}}$ (or $\innerpro{.,.}_{\mathbf{V}}$, respectively). Let us briefly recall some notions from functional analysis that we need in order to set up our theory. There are lots of standard references on these aspects, e.g. \citet{teschlfunctional} and \citet{teschlrealana}:
\begin{itemize}
    \item $\mathcal{L}(\mathbf{U},\mathbf{V})$: space of bounded linear operators $\mathbf{U} \to \mathbf{V}$ with uniform norm $\norm{.}_{\mathcal{L}(\mathbf{U},\mathbf{V})}$. $\mathcal{L}(\mathbf{U})$: space of bounded linear operators $\mathbf{U} \to \mathbf{U}$.
    \item For $A\in \mathcal{L}(\mathbf{U},\mathbf{V})$, its \textit{adjoint} is denoted by $A^{*} \in \mathcal{L}(\mathbf{V},\mathbf{U}) $ (and uniquely defined by the equation $\innerpro{Au,v}_{\mathbf{V}}=\innerpro{u,A^{*}v}_{\mathbf{U}}$ for any $u\in \mathbf{U}$, $v\in \mathbf{V}$). 
  
    \item If $A \in \mathcal{L}(\mathbf{U})$ and $A=A^*$: $A$ is called \textit{self-adjoint}.
    \item If $A \in \mathcal{L}(\mathbf{U})$ is self adjoint and $\innerpro{Au,u}_{\mathbf{U}} \geq 0$ for any $u\in \mathbf{U}$, then $A$ is called \textit{positive}. Equivalently: there exists (unique) bounded and self-adjoint $B:=\sqrt{A} \in \mathcal{L}(\mathbf{U})$ such that $B^2=A$.
    \item \textit{Trace} of an operator $A\in \mathcal{L}(\mathbf{U})$: $Tr(A)=\sum_k \innerpro{Ae_k,e_k}_{\mathbf{U}}$ for any orthonormal basis $\left(e_k\right)_{k=1}^\infty$ of $\mathbf{U}$ (independent of choice of basis). If $Tr(A)<\infty$: $A$ is called \textit{trace class}.
    \item $\mathcal{L}_2 (\mathbf{U})$: separable Hilbert space of \textit{Hilbert-Schmidt operators} on $\mathbf{U}$ with scalar product 
	$ \innerpro{A,B}_{\mathcal{L}_2 (\mathbf{U})} = Tr (B^* A)$
	and norm 
	$\norm{A}_{\mathcal{L}_2 (\mathbf{U})}  = \sqrt{ Tr (A^* A )}  \geq \norm{A}_{\mathcal{L}(\mathbf{U}) }.$
	\item $A: \mathbf{U} \to \mathbf{V}$ is called \textit{Hilbert-Schmidt}, if  $A^*A$ is trace class. Also here: $\norm{A}_{\mathcal{L}(\mathbf{U},\mathbf{V})} \le \sqrt{Tr(A^*A)}$
	\item For (probability) measure $q$ on $\X$ (or $\Y$) and appropriate functions $F: \X \to \mathbf{U}$ (e.g. strongly measurable and $\norm{F}_{\mathbf{U}}$ is integrable wrt. $q$) we denote the usual ($\mathbf{U}$-valued) \textit{Bochner integral} of $F$ as
	$\int_{\X} F(x) \diff q(x)$. We denote the associated $L^p$-spaces by $L^p(\X,q,\mathbf{U})$, or $L^p(q)$ for short, if the associated spaces are clear from the context.
\end{itemize}
\paragraph{Assumptions on models}
	We assume that the regression function $f^* = f_p = f_q$ as well as the models $f_1,...,f_l$ belong to a \textit{hypothesis space} $\mathcal{H}\subseteq  C(\X, \Y) \subseteq L^2(p_{\X}) \cap L^2(q_{\X}) $, where $C(\X, \Y)$ denotes the space of bounded continous functions $\X \to \Y$. The space $\mathcal{H}$ should satisfy the following assumptions, which are discussed in much greater detail in \citet{caponnetto2007optimal} and \citet{caponnetto2005risk}:
	\begin{hypothesis}\label{hyp1} \citep{caponnetto2007optimal}
		The space $\mathcal{H}$ is a separable Hilbert space of functions $f: \X \rightarrow \Y$ such that:
		\begin{itemize}
			\item For all $x \in  \X$ there is a Hilbert-Schmidt operator $K_x: \Y \rightarrow \Hilbert$ satisfying 
			\begin{align}\label{reproducing_property}
				f(x) = K_x^* f, \hspace{0.3cm} f \in \Hilbert,
			\end{align}
			
			\item The function from $\X \times \X$ to $\R$
			\begin{align}\label{hyp1.2}
				(x,t) \mapsto \innerpro{K_t v,K_x w}_\mathcal{H} \text{ is measurable } \forall v, w \in \Y;
			\end{align}
			
			\item There is $\kappa > 0$ such that
			\begin{align}\label{hyp1.3}
				Tr(K_x^* K_x)\leq \kappa, \hspace{0.3cm} \forall x \in \X.
			\end{align}
		\end{itemize}
	\end{hypothesis}
Moreover we assume that the norms $\norm{f_k  }_{\mathcal{H}}  $, $ k = 1,2, \ldots, l,$ are under our control, such that we can put a threshold $\gamma_l > 0 $ and consider $\norm{f_k  }_{\mathcal{H}} \leq \gamma_l. $	
\paragraph{Further useful observations}
	Then we have
	\begin{align}
		K^*_t K_x = K(t,x) \in \mathcal{L}_2 (\Y) \hspace*{0.3cm} \forall x,t \in \X.
	\end{align}
	Given $ x \in \X $ the operator 
	\begin{align}\label{Tx_form}
		T_x = K_x K_x^* \in \mathcal{L}_2 (\mathcal{H}) ,
	\end{align}
	is a positive Hilbert-Schmidt operator and (\ref{Tx_form}) ensures 
	\begin{align}\label{kk_norm}
		\norm{T_x}_{\mathcal{L}(\mathcal{H})}  \leq \norm{T_x}_{\mathcal{L}_2 (\mathcal{H})} = \norm{K(x,x)}_{\mathcal{L}_2(\Y)} \leq \kappa.
	\end{align}
	
	Let $T_{q_\X} : \mathcal{H} \rightarrow \mathcal{H}$ be 
	$$ T_{q_\X} = \int_\X T_x \diff q_\X (x),$$
	where the integral converges in $ \mathcal{L}_2 (\mathcal{H}) $ to a positive trace class operator with 
	\begin{align}\label{Tt_norm}
		\norm{T_{q_\X}}_{\mathcal{L}(\mathcal{H})} \le  \norm{T_{q_\X}}_{\mathcal{L}_2(\mathcal{H})}\leq Tr (T_{q_\X}) = \int_\X Tr (T_x) \diff q_\X (x) \leq \kappa.
	\end{align}
	
	Following Proposition 1 in \citet{caponnetto2007optimal}, we have the minimizers $ f_q $ of expected risk $ \mathcal{R}_q  $ are the solution of the following equation: 
	\begin{align*}
		T_{q_\X} f_q = g,
	\end{align*}
	where 
	$$g = \int_{\X} K_x f_q (x) \diff q_\X (x) \in \mathcal{H},$$
	with integral converging in $ \mathcal{H}. $

	Next we define  the operators 
	\begin{align*}
		&T_{\x'} = \frac{1}{m} \sum_{j=1}^m K_{x_j'} K_{x_j'}^*, \\
		&T_{\x,\beta} = \frac{1}{n} \sum_{i=1}^n \beta(x_i)  K_{x_i} K_{x_i}^*,\\
		&g_{\x,\y,\beta} = \frac{1}{n} \sum_{i=1}^n \beta(x_i) K_{x_i} y_i .
	\end{align*} 
	
	
	In the sequel we adopt the convention that $C$ denotes a generic positive coefficient, which can vary from appearance to appearance and may only depend on basic parameter such as $p_\X$, $q_\X$, $\kappa$, $B$, $y_0$ and others introduced below, but not on $n,m$ and error probability $\delta>0$. 
	
	We will need the following statements. 
	\begin{lemma}\label{lem1}
		With probability at least $ 1 - \delta  $ we have 
		\begin{align}
			&\norm{T_{q_\X} - T_{\x'}}_{\mathcal{L}(\mathcal{H})} \le \norm{T_{q_\X} - T_{\x'}}_{\mathcal{L}_2(\mathcal{H})} \leq C \left( \log^{\frac{1}{2}} \frac{1}{\delta} \right) m^{-\frac{1}{2}},\label{1st_lem1}\\
			&\norm{T_{\x'}  - T_{\x,\beta} }_{\mathcal{L}(\mathcal{H})}  \leq C \left( \log^{\frac{1}{2}} \frac{1}{\delta} \right) \left( n^{-\frac{1}{2}} + m^{-\frac{1}{2}} \right),\label{2nd_lem1}\\
			&\norm{T_{\x,\beta} f^* - g_{\x,\y,\beta}}_{\mathcal{H}} \leq C \left( \log^{\frac{1}{2}} \frac{1}{\delta} \right)  n^{-\frac{1}{2}} ,\label{3rd_lem1}
		\end{align}
		where $C>0$ does not depend on $n,m$ and $\delta.$
	\end{lemma}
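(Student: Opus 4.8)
The plan is to prove the three bounds in Lemma~\ref{lem1} by identifying each left-hand side as the deviation of an empirical average of Hilbert-space-valued (or Hilbert-Schmidt-operator-valued) random variables from its mean, and then applying a Hoeffding-type concentration inequality in Hilbert space (as in \citet{caponnetto2007optimal,caponnetto2005risk,huang2006correcting}). For \eqref{1st_lem1}, I would write $T_{q_\X} = \int_\X T_x \diff q_\X(x) = \mathbb{E}[T_{x'}]$ for a single draw $x'\sim q_\X$, while $T_{\x'} = \frac{1}{m}\sum_{j=1}^m T_{x_j'}$ is the empirical mean over the $m$ i.i.d.~target points; since $\norm{T_x}_{\mathcal{L}_2(\mathcal{H})} \le \kappa$ uniformly by \eqref{kk_norm}, the summands are bounded in the Hilbert-Schmidt norm, and the vector-valued Hoeffding bound gives a deviation of order $\kappa\, m^{-1/2}\log^{1/2}(1/\delta)$ with probability $1-\delta$; the stated inequality then follows from $\norm{\cdot}_{\mathcal{L}(\mathcal{H})}\le\norm{\cdot}_{\mathcal{L}_2(\mathcal{H})}$.

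For \eqref{3rd_lem1}, I would use the defining relation $T_{q_\X} f^* = g = \int_\X K_x f_q(x)\diff q_\X(x)$ together with the covariate-shift and bounded-density-ratio assumptions to rewrite $g = \int_\X \beta(x) K_x f_p(x)\diff p_\X(x)$, i.e.~$g = \mathbb{E}_{x\sim p_\X}[\beta(x) K_x f_p(x)]$. Similarly, $T_{\x,\beta} f^* = \frac{1}{n}\sum_{i=1}^n \beta(x_i) K_{x_i} K_{x_i}^* f^* = \frac{1}{n}\sum_{i=1}^n \beta(x_i) K_{x_i} f^*(x_i)$, whose expectation over the source draw is exactly $g$. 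The difference $T_{\x,\beta} f^* - g_{\x,\y,\beta} = \frac{1}{n}\sum_{i=1}^n \beta(x_i) K_{x_i}(f^*(x_i) - y_i)$ is therefore a centered empirical mean of $\mathcal{H}$-valued random variables (centered because $\mathbb{E}[y_i\mid x_i] = f_p(x_i) = f^*(x_i)$), each bounded in $\mathcal{H}$-norm by $B(\kappa^{1/2})(\gamma_l\kappa^{1/2} + y_0)$ or similar using \eqref{kk_norm}, $\norm{K_x}_{\mathcal{L}(\Y,\mathcal{H})}\le\kappa^{1/2}$, $\norm{f^*}_\mathcal{H}\le\gamma_l$ and $\norm{y}_\Y\le y_0$; the Hilbert-space Hoeffding bound again yields the $n^{-1/2}\log^{1/2}(1/\delta)$ rate.

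The slightly more delicate bound is \eqref{2nd_lem1}, where $T_{\x'}$ and $T_{\x,\beta}$ are empirical averages over \emph{different} domains: $T_{\x'}$ averages $T_{x_j'}$ over target points while $T_{\x,\beta}$ averages $\beta(x_i) T_{x_i}$ over source points. Here I would insert the common population object $T_{q_\X}$ and split via the triangle inequality, $\norm{T_{\x'} - T_{\x,\beta}}_{\mathcal{L}(\mathcal{H})} \le \norm{T_{\x'} - T_{q_\X}}_{\mathcal{L}(\mathcal{H})} + \norm{T_{q_\X} - T_{\x,\beta}}_{\mathcal{L}(\mathcal{H})}$; the first term is \eqref{1st_lem1}, and the second is controlled by noting $\mathbb{E}_{x\sim p_\X}[\beta(x) T_x] = \int_\X \beta(x) T_x \diff p_\X(x) = \int_\X T_x \diff q_\X(x) = T_{q_\X}$, so $T_{q_\X} - T_{\x,\beta}$ is once more a centered empirical mean, with summands bounded by $B\kappa$, giving the $n^{-1/2}\log^{1/2}(1/\delta)$ rate. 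This is the same mean-matching trick used in \citet[Lemma~4]{gretton2006kernel}. Combining the two contributions and absorbing constants into a single generic $C$ yields \eqref{2nd_lem1}. Finally, a union bound over the (at most three) events degrades $\delta$ only by a constant factor, which is harmless.

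The main obstacle I anticipate is not any single estimate but the bookkeeping needed to justify that the relevant Bochner integrals and expectations exist and that the measurability/strong-measurability hypotheses required by the Hilbert-space Hoeffding inequality are met — this is where Hypothesis~\ref{hyp1}, in particular \eqref{hyp1.2} and the uniform trace bound \eqref{hyp1.3}, does the real work, and where I would lean on the detailed treatment in \citet{caponnetto2007optimal,caponnetto2005risk}. A secondary subtlety is verifying that the reproducing property $f^*(x_i) = K_{x_i}^* f^*$ and the identity $K_{x_i}K_{x_i}^* f^* = K_{x_i} f^*(x_i)$ are used consistently, so that the "bias" term in \eqref{3rd_lem1} genuinely vanishes in expectation under covariate shift; once that is pinned down, the concentration steps are routine.
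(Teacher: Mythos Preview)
Your proposal is correct and follows essentially the paper's approach: for \eqref{1st_lem1} and \eqref{3rd_lem1} you do exactly what the paper does, namely apply a Hilbert-space Hoeffding/Pinelis bound to the centered $\mathcal{L}_2(\Hilbert)$-valued variables $T_{x_j'}-T_{q_\X}$ and to the centered $\Hilbert$-valued variables $\beta(x_i)K_{x_i}(f_p(x_i)-y_i)$, respectively (the paper bounds the latter by $2y_0 B\sqrt{\kappa}$ via $\norm{f_p(x)}_\Y\le y_0$ rather than via $\gamma_l$, but this is only a constant).

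The one organizational difference is in \eqref{2nd_lem1}: the paper invokes \citet[Lemma~4]{huang2006correcting} directly, applied to the map $\phi_f(x)=K_xK_x^*f$ for each fixed $f\in\Hilbert$, to bound $\norm{(T_{\x'}-T_{\x,\beta})f}_\Hilbert$ in one stroke; you instead insert the population operator $T_{q_\X}$, split by the triangle inequality, and run two separate Hilbert--Schmidt concentrations. Your route is slightly more self-contained and actually yields the stronger $\mathcal{L}_2(\Hilbert)$ bound (hence the operator-norm bound) on a single high-probability event, whereas the paper's per-$f$ application of Huang's lemma formally gives a bound for each fixed $f$. Either way the underlying identity $\mathbb{E}_{x\sim p_\X}[\beta(x)T_x]=T_{q_\X}$ is the crux, so the two arguments are equivalent in substance.
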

	The proof of Lemma \ref{lem1} is based on Lemma 4 of \citet{huang2006correcting}, which we formulate in our notations as follows

	\begin{lemma} (\citep{huang2006correcting}) \label{lem:huang}
		Let $\phi$ be a map from $\mathbf{U}$ to
		$\mathbf{U}$ such that $\norm{\phi(x)}_{\mathbf{U}} \leq R$ for all $x \in \X$. Then with probability at least $1 - \delta$ it holds 
		$$\norm{ \frac{1}{m} \sum_{j = 1}^m \phi (x_j') -  \frac{1}{n} \sum_{i = 1}^n \beta (x_i) \phi (x_i)  }_{\mathbf{U}} \leq \left( 1 + \sqrt{2 \log \frac{2}{\delta}} \right) R \sqrt{ \frac{B^2}{n} + \frac{1}{m}}.$$
	\end{lemma}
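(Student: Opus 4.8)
The plan is to recognise this as a two-sample concentration statement and to prove it by first bounding the expectation of the relevant norm and then adding a bounded-differences deviation term. The structural fact I would use first is that, by the bounded density ratio assumption $\diff q_\X(x)=\beta(x)\diff p_\X(x)$, the two empirical averages in the statement estimate the \emph{same} mean
$$\mu:=\int_\X\phi(x)\diff q_\X(x)=\int_\X\beta(x)\phi(x)\diff p_\X(x)\in\mathbf{U},$$
where the $\mathbf{U}$-valued Bochner integrals exist because $\norm{\phi(x)}_{\mathbf{U}}\le R$ and $\phi$ is strongly measurable (in every application of the lemma the latter follows from Hypothesis~\ref{hyp1}). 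Writing $Z$ for the left-hand side of the claimed inequality, I would then decompose $Z=\norm{S_m-S_n}_{\mathbf{U}}$ with $S_m=\tfrac1m\sum_{j=1}^m(\phi(x_j')-\mu)$, which depends only on the target sample, and $S_n=\tfrac1n\sum_{i=1}^n(\beta(x_i)\phi(x_i)-\mu)$, which depends only on the source sample; both are centered, and the two samples are independent.

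For the expectation term, Jensen's inequality gives $\mathbb{E}[Z]\le(\mathbb{E}[Z^2])^{1/2}$. Since $S_m$ and $S_n$ are independent and centered, $\mathbb{E}\innerpro{S_m,S_n}_{\mathbf{U}}=\innerpro{\mathbb{E}S_m,\mathbb{E}S_n}_{\mathbf{U}}=0$, so $\mathbb{E}[Z^2]=\mathbb{E}\norm{S_m}_{\mathbf{U}}^2+\mathbb{E}\norm{S_n}_{\mathbf{U}}^2$; and since independent centered $\mathbf{U}$-valued summands have vanishing pairwise cross-covariance, $\mathbb{E}\norm{S_m}_{\mathbf{U}}^2=\tfrac1m\bigl(\mathbb{E}_{q_\X}\norm{\phi}_{\mathbf{U}}^2-\norm{\mu}_{\mathbf{U}}^2\bigr)\le R^2/m$ and, using $0\le\beta\le B$, $\mathbb{E}\norm{S_n}_{\mathbf{U}}^2=\tfrac1n\bigl(\mathbb{E}_{p_\X}\norm{\beta(x)\phi(x)}_{\mathbf{U}}^2-\norm{\mu}_{\mathbf{U}}^2\bigr)\le\tfrac{R^2}{n}\,\mathbb{E}_{p_\X}[\beta(x)^2]\le B^2R^2/n$. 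Hence $\mathbb{E}[Z]\le R\sqrt{B^2/n+1/m}$, which is the origin of the constant $1$ in the prefactor.

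For the deviation term, I would apply McDiarmid's bounded-differences inequality to the scalar $Z$, viewed as a function of the $n+m$ independent points $x_1,\ldots,x_n,x_1',\ldots,x_m'$ (equivalently, a Hoeffding-type inequality for Hilbert-space-valued averages). Changing one target point $x_j'$ moves $S_m-S_n$ by a vector of $\mathbf{U}$-norm at most $2R/m$, hence changes $Z$ by at most $2R/m$; changing one source point $x_i$ changes $Z$ by at most $2BR/n$ because $\norm{\beta(x)\phi(x)}_{\mathbf{U}}\le BR$. The sum of squared bounded differences is therefore $n(2BR/n)^2+m(2R/m)^2=4R^2(B^2/n+1/m)$, so McDiarmid gives $\mathbb{P}(Z\ge\mathbb{E}[Z]+t)\le\exp\bigl(-t^2/(2R^2(B^2/n+1/m))\bigr)$. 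Setting the right-hand side equal to $\delta/2$ yields $t=\sqrt{2\log(2/\delta)}\,R\sqrt{B^2/n+1/m}$, and adding this to the bound on $\mathbb{E}[Z]$ gives the claimed inequality with probability at least $1-\delta/2\ge1-\delta$.

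I do not expect a genuine obstacle, since this is a classical estimate; the two points that need care are exactly where the structure of the setting enters: (i) the unbiasedness identity $\int_\X\beta\phi\,\diff p_\X=\int_\X\phi\,\diff q_\X$, which is the only place the bounded density ratio assumption is used and which is what makes the two averages comparable at all; and (ii) that $S_m$ and $S_n$ take values in a possibly infinite-dimensional Hilbert space, so one must invoke Bochner-integral orthogonality of independent centered random elements and strong measurability of $\phi$ (both available, the latter through Hypothesis~\ref{hyp1}). The only bookkeeping choice is the crude bound $\mathbb{E}_{p_\X}[\beta^2]\le B^2$ used above, which could be replaced by the sharper $\mathbb{E}_{p_\X}[\beta^2]=\int_\X\beta\,\diff q_\X\le B$ at the cost of nothing but constants.
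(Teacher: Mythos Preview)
The paper does not supply its own proof of this lemma; it merely quotes it as Lemma~4 of \citet{huang2006correcting} and uses it as a black box in the proof of Lemma~\ref{lem1}. Your argument is correct and is in fact the standard proof one finds in that reference (and in the closely related MMD concentration results of \citet{gretton2006kernel}): first bound $\mathbb{E}[Z]$ via $\mathbb{E}[Z]\le(\mathbb{E}[Z^2])^{1/2}$ together with the orthogonality of the independent centered pieces $S_m$ and $S_n$, then control the deviation $Z-\mathbb{E}[Z]$ by McDiarmid's bounded-differences inequality applied to the $n+m$ independent coordinates. Your variance bounds $\mathbb{E}\norm{S_m}_{\mathbf{U}}^2\le R^2/m$ and $\mathbb{E}\norm{S_n}_{\mathbf{U}}^2\le B^2R^2/n$ and your bounded-difference constants $2R/m$ and $2BR/n$ are exactly right and reproduce the stated prefactor $1+\sqrt{2\log(2/\delta)}$. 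The only cosmetic points are that the lemma as printed says ``$\phi$ is a map from $\mathbf{U}$ to $\mathbf{U}$'' where clearly $\X\to\mathbf{U}$ is meant (you read it correctly), and that your observation about the sharper bound $\mathbb{E}_{p_\X}[\beta^2]\le B$ is valid but, as you say, irrelevant for the constants the paper actually uses.
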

	Moreover, we will need a concentration inequality that follows from \citet{pinelis1992approach}, see also \citet{rosasco2010learning}.  

	\begin{lemma} [Concentration lemma] \label{lem:concentration} If $\xi_1, \xi_2, \ldots, \xi_n$ are zero mean independent random variables with values in a separable Hilbert space $\mathbf{U}$, and for some $D > 0$ one has $\norm{\xi_i}_{\mathbf{U}} \leq D$, $i = 1,2,\ldots,n,$ then the following bound $$\norm{\frac{1}{n} \sum_{i = 1}^n \xi_i }_{\mathbf{U}} \leq \frac{D \sqrt{2 \log \frac{2}{\delta}}}{\sqrt{n}}$$
	holds true with probability at least $1 - \delta$.
	\end{lemma}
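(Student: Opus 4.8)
The plan is to derive Lemma~\ref{lem:concentration} directly from the Hoeffding-type tail inequality for Hilbert-space-valued sums of \citet{pinelis1992approach}. First I would recall the precise form I need: for independent, zero-mean random variables $\xi_1,\ldots,\xi_n$ with values in a separable Hilbert space $\mathbf{U}$ satisfying $\norm{\xi_i}_{\mathbf{U}}\le D_i$ almost surely, Pinelis's inequality gives
\begin{align*}
\mathbb{P}\left(\norm{\sum_{i=1}^n \xi_i}_{\mathbf{U}}\ge t\right)\le 2\exp\left(-\frac{t^2}{2\sum_{i=1}^n D_i^2}\right).
\end{align*}
The reason this is the right tool, and the reason the constants come out exactly as claimed, is that a Hilbert space is $2$-uniformly smooth with the optimal smoothness constant; hence the exponent carries the sharp factor $2\sum_i D_i^2$ rather than the larger constant one would get in a general $2$-smooth Banach space.

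The reduction is then purely algebraic. I would specialize to the homogeneous case $D_i = D$, so that $\sum_{i=1}^n D_i^2 = nD^2$, and rewrite the event for the empirical average via the identity $\norm{\tfrac1n\sum_{i=1}^n\xi_i}_{\mathbf{U}}\ge r \iff \norm{\sum_{i=1}^n\xi_i}_{\mathbf{U}}\ge nr$. Applying the displayed bound with $t = nr$ yields
\begin{align*}
\mathbb{P}\left(\norm{\frac1n\sum_{i=1}^n \xi_i}_{\mathbf{U}}\ge r\right)\le 2\exp\left(-\frac{(nr)^2}{2nD^2}\right)=2\exp\left(-\frac{nr^2}{2D^2}\right).
\end{align*}
Setting the right-hand side equal to $\delta$ and solving for $r$ gives $r = D\sqrt{2\log(2/\delta)}/\sqrt{n}$; the complement of this event then occurs with probability at least $1-\delta$, which is exactly the stated bound.

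The only genuine subtlety, and thus the main obstacle, is the bookkeeping of the two factors of $2$ — one inside and one in front of the exponential — since both trace back to the sharp smoothness constant of the Hilbert norm; a looser constant would fail to reproduce the exact bound invoked later in Lemma~\ref{lem1}. If a self-contained derivation of Pinelis's inequality were desired rather than a citation, the work would lie in constructing an exponential supermartingale from the smoothness identity $\norm{x+y}_{\mathbf{U}}^2 = \norm{x}_{\mathbf{U}}^2 + 2\innerpro{x,y}_{\mathbf{U}} + \norm{y}_{\mathbf{U}}^2$ applied to the partial sums $S_k=\sum_{i\le k}\xi_i$, and then applying Doob's maximal inequality together with Markov's inequality; but since \citet{pinelis1992approach} (see also \citet{rosasco2010learning}) already supplies the inequality in the required form, invoking it and performing the inversion above is the most economical route.
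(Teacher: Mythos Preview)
Your proposal is correct and follows exactly the route the paper takes: the paper does not prove the lemma at all but simply states that it ``follows from \citet{pinelis1992approach}, see also \citet{rosasco2010learning}.'' You have merely made explicit the algebraic inversion from Pinelis's tail bound to the high-probability deviation form, which the paper leaves to the reader.
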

	\textbf{Proof of Lemma \ref{lem1}}.
	
	Let us start by proving \eqref{1st_lem1} by introducing the map $ \xi : \X \rightarrow \mathcal{L}_2 (\mathcal{H}) $ as $ \xi (x) = K_x  K_x^* - T_{q_\X}. $ From (\ref{kk_norm}) and (\ref{Tt_norm}) it follows that 
	$$ \norm{\xi (x)}_{\mathcal{L}_2 (\mathcal{H})} \leq \norm{K_x  K_x^*}_{\mathcal{L}_2 (\mathcal{H})} + \norm{T_{q_\X}}_{\mathcal{L}_2 (\mathcal{H})} \leq 2\kappa. $$
	Moreover, we have 
	$$\int_{\X} \xi (x) dq_\X (x) =  \int_{\X} K_x  K_x^* dq_\X (x) - T_{q_\X} = 0. $$
	Therefore, for $x_j', j =1,2,\ldots,m,$ drawn i.i.d from the marginal probability measure $ q_\X, $  the corresponding operators $ \xi_j = \xi (x_j') $ can be treated as zero mean independent random variables in $ {\mathcal{L}_2 (\mathcal{H})},  $ such that the condition of Concentration lemma are satisfied with $ D = 2\kappa,  $ and
	$$\norm{T_{\x'} - T_{q_\X}}_{\mathcal{L}_2 (\mathcal{H})}=\norm{\frac{1}{m} \sum_{j=1}^m K_{x_j'} K_{x_j'}^* - T_{q_\X} }_{\mathcal{L}_2 (\mathcal{H})}=\norm{\frac{1}{m} \sum_{j=1}^m \xi_j  }_{\mathcal{L}_2 (\mathcal{H})} \leq \frac{2 \kappa \sqrt{2 \log \frac{2}{\delta}}}{\sqrt{m}}.$$

	To obtain \eqref{2nd_lem1}, for any $ f \in \mathcal{H} $ we define a map $ \phi = \phi_f : \X \rightarrow \mathcal{H} $ as $  \phi_f(x) = K_xK_x^* f$. It clear that 
	$$\norm{\phi_f (x)}_{\mathcal{H}} = \norm{K_xK_x^*}_{\mathcal{L} (\mathcal{H})}  \norm{f}_{\mathcal{H}} \leq \kappa \norm{f}_{\mathcal{H}}.$$
	Therefore, for the map $\phi = \phi_f$ the condition of the above Lemma \ref{lem:huang} is satisfied with $R = \kappa \norm{f}_{\mathcal{H}}$. Then directly from that lemma for any $f \in \mathcal{H}$ we have 
	\begin{align*}
		\norm{T_{\x'} f - T_{\x,\beta} f}_{\mathcal{H}} &= \norm{\frac{1}{m} \sum_{j = 1}^m \phi_f (x_j') -   \frac{1}{n} \sum_{i = 1}^n \beta(x_i) \phi_f (x_i)}_{\mathcal{H}} \\
		& \leq \left( 1 + \sqrt{2 \log \frac{2}{\delta}} \right) \left( \sqrt{\frac{B^2}{n} + \frac{1}{m}} \right) \kappa \norm{f}_{\mathcal{H}} \\
		& \leq C \left( \log^\frac{1}{2} \frac{1}{\delta}\right) \left( m^{- \frac{1}{2}} + n^{- \frac{1}{2}} \right) \norm{f}_{\mathcal{H}},
	\end{align*}
	that proves \eqref{2nd_lem1}.
	
	Consider now the map $\digamma: \X \times \Y \rightarrow \mathcal{H}$ defined by $$\digamma (x, y) = \beta (x) K_x (f_p (x) - y) .$$
	Recall that $\norm{K_x}_{\mathcal{L}(\Y,\Hilbert) } \le \sqrt{Tr(K_x^* K_x)} \le \sqrt{\kappa}$. Then we obtain: 
	\begin{equation*}
		\norm{\digamma (x, y) }_{\mathcal{H} }  \le \norm{K_x}_{\mathcal{L}(\Y,\Hilbert) }   \norm{\int_{\Y} y' dp (y'|x) - y }_\Y  \left|\beta (x) \right| 
		 \leq 2 y_0 B \sqrt{\kappa}.
	\end{equation*}
	Moreover, for $p(x,y) = p(y|x) p_\X (x)$ we have
	\begin{align*}
		\int_{\X \times \Y} \digamma (x,y) d p(x,y) &= \int_{\X } K_x \beta (x) \int_{\Y } \left( \int_{\Y} y' dp (y'|x) - y \right) dp(y|x) dp_\X (x) = 0,
	\end{align*} 
	such that for $(x_i,y_i),$ $ i=1,2,\ldots,n$, drawn i.i.d from the measure $p(x,y)$ the corresponding values $\digamma_i = \digamma (x_i,y_i)$ are zero mean independent random variables in $\mathcal{H}$.\\
	Then for the just defined $\digamma_i = \beta (x_i) K_{x_i} (f_q (x_i) - y_i) $ the conditions of Lemma \ref{lem:concentration} are satisfied with $D = 2 y_0 B \sqrt{\kappa},$ such that 
	\begin{align*}
		\norm{\frac{1}{n} \sum_{i = 1}^n \digamma_i }_{\mathcal{H}} &= \norm{\frac{1}{n} \sum_{i = 1}^n \beta (x_i) K_{x_i} (f_q (x_i) - y_i)} _{\mathcal{H}}\\
		&= \norm{\sum_{i = 1}^n \beta (x_i) K_{x_i} K_{x_i}^* f_q -  \sum_{i = 1}^n \beta (x_i) K_{x_i} y_i}_{\mathcal{H}} \\
		&=\norm{T_{\x,\beta} f_q - g_{\x,\y,\beta}}_{\mathcal{H}}
		\leq \frac{2 y_0 B \sqrt{\kappa} \sqrt{2 \log \frac{2}{\delta}}}{\sqrt{n}}.
	\end{align*}
	This bound gives us \eqref{3rd_lem1}.
	
	\paragraph{Aggregation for vector-valued functions}
	Next we construct a new approximant in the form of a linear combination of approximants $ f_1, f_2, \ldots, f_l, $ computed for all tried parameter values. 
	The linear combination of the approximants is computed as
	\begin{align}\label{f_agrregation}
		f  = \sum_{k=1}^l c_k f_k.
	\end{align}
	Since $ f_1, f_2, \ldots, f_l $ belong to RKHS $ \mathcal{H} $, it is clear that $ f \in \mathcal{H}. $ Now we want to argue on how close we can get to $f_q$. Following Proposition 1 in \citet{caponnetto2007optimal}, we have 
	\begin{align} \label{eq:target_risk}
		\mathcal{R}_q (f) - \mathcal{R}_q (f_q)=\norm{f - f_q}_{L^2(q_\X)}^2=\norm{\sqrt{T_{q_\X}} (f - f_q)}_\mathcal{H}^2.
	\end{align}
	
Next we observe that the best approximation $f^*$ of the target regression function $f_q$ by linear combinations corresponds to the vector $c^*=(c_1^*,\ldots,c_l^*)$ of ideal coefficients in \eqref{f_agrregation} that solves the linear system $G c^* = \bar{g}$ with the Gram matrix $G = \left(  \innerpro{ \sqrt{T_{q_\X}} f_k , \sqrt{T_{q_\X}} f_u }_{\mathcal{H}} \right)_{k,u=1}^l$ and the right-hand side vector $\bar{g} = \left(\innerpro{\sqrt{T_{q_\X}} f_q , \sqrt{T_{q_\X}} f_k }_{\mathcal{H}} \right)_{k=1}^l$. Let us provide a prove of this short observation in the next lemma. Note that the entries $G$ and $g$ can equivalently also be formulated in terms of $\innerpro{.,.}_{L^2(q_\X)}$, as done in the main text. We are going to use this formulation in the next lemma in order to be compatible with the main text (switching to the inner products in terms of $\Hilbert$ would not change the argument of the proof at all):
	\begin{lemma}
\label{lem:least_squares_sol}
The best $L^2(q_\X)$-approximation $f^*$ of the target regression function $f_q$ by linear combinations corresponds to the vector $c^*=(c_1^*,\ldots,c_l^*)=G^{-1} \overline{g}$.
\end{lemma}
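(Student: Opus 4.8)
The plan is to recognize that $f^* = \sum_{k=1}^l c_k^* f_k$ is, by definition, the orthogonal projection of $f_q$ onto the finite-dimensional subspace $V = \operatorname{span}\{f_1,\ldots,f_l\} \subseteq L^2(q_\X)$, and then to translate the standard normal-equations characterization of that projection into the claimed linear system $G c^* = \overline{g}$. First I would write the objective from Eq.~\eqref{eq:core_representation}, using the covariate shift assumption and the identity $\diff q_\X = \beta \diff p_\X$ to rewrite $\int_\X \beta(x)\norm{\sum_i c_i f_i(x) - f_p(x)}_\Y^2 \diff p_\X(x) = \norm{\sum_i c_i f_i - f_q}_{L^2(q_\X)}^2$, so that minimizing the target risk over $c \in \mathbb{R}^l$ is exactly the least-squares problem $\min_c \norm{\sum_i c_i f_i - f_q}_{L^2(q_\X)}^2$.

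Next I would appeal to the classical fact that a minimizer $c^*$ of a quadratic $\norm{\sum_i c_i f_i - f_q}_{L^2(q_\X)}^2$ is characterized by the first-order (orthogonality) conditions: the residual $f^* - f_q = \sum_i c_i^* f_i - f_q$ must be orthogonal to each $f_k$ in $L^2(q_\X)$, i.e.\ $\innerpro{\sum_i c_i^* f_i - f_q, f_k}_{L^2(q_\X)} = 0$ for $k = 1,\ldots,l$. Expanding by bilinearity gives $\sum_i c_i^* \innerpro{f_i, f_k}_{L^2(q_\X)} = \innerpro{f_q, f_k}_{L^2(q_\X)}$ for each $k$, which is precisely $G c^* = \overline{g}$ with $G = (\innerpro{f_i,f_k}_{L^2(q_\X)})_{i,k}$ and $\overline{g} = (\innerpro{f_q, f_k}_{L^2(q_\X)})_k$; using the covariate shift identity $f_q = f_p$ and $\diff q_\X = \beta\diff p_\X$ once more rewrites $\overline{g}_k = \innerpro{\beta f_p, f_k}_{L^2(p_\X)}$, matching the definition in the text. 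Since $G$ is assumed invertible (positive definite), this yields $c^* = G^{-1}\overline{g}$, and convexity of the quadratic guarantees this critical point is the unique global minimizer.

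I would also remark, for completeness, that the same argument goes through verbatim with the inner products written in terms of $\Hilbert$, using the identity $\innerpro{f_i, f_k}_{L^2(q_\X)} = \innerpro{\sqrt{T_{q_\X}} f_i, \sqrt{T_{q_\X}} f_k}_\Hilbert$ from Eq.~\eqref{eq:target_risk} and the analogous representation $\mathcal{R}_q(f) - \mathcal{R}_q(f_q) = \norm{\sqrt{T_{q_\X}}(f - f_q)}_\Hilbert^2$, which shows the two formulations of $G$ and $\overline{g}$ coincide. This step is essentially bookkeeping.

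The argument has no real obstacle: the only points requiring a word of care are (i) ensuring that minimizing $\mathcal{R}_q$ over aggregations is genuinely equivalent to the stated $L^2(q_\X)$ projection problem, which is where Proposition~1 of \citet{cucker2002mathematical} (already invoked in the text) and the covariate shift assumption are used to discard the label-dependent constant term, and (ii) invoking invertibility of $G$, which is an explicit standing assumption. Everything else is the textbook derivation of the normal equations for finite-dimensional least squares.
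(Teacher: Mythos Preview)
Your proposal is correct and essentially matches the paper's proof: both derive the normal equations $Gc^*=\overline{g}$ for the quadratic $c\mapsto\norm{\sum_i c_i f_i - f_q}_{L^2(q_\X)}^2$ and invoke positive definiteness of $G$ for uniqueness. The only cosmetic difference is that the paper expands the quadratic explicitly, differentiates in each $c_i$, and checks the Hessian equals $2G$, whereas you phrase the same first-order condition as the orthogonality of the residual to each $f_k$; these are the same computation in different language.
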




\begin{proof}
Let us denote \eqref{eq:target_risk} by $f(c)$ and rewrite this expression appropriately:
\begin{align*}
    f(c)
    =\sum_{i,j=1}^l c_i c_j \innerpro{f_i,f_j}_{L^2(q_{\X})}-2 \sum_{i=1}^l c_i \innerpro{f_i,f_q}_{L^2(q_{\X})}+ \innerpro{f_q,f_q}_{L^2(q_{\X})}. 
\end{align*}
Taking the derivative with respect to $c_i$ yield:
\begin{align*}
\frac{\partial f(c)}{\partial c_i}=2 \left( \sum_{j=1}^l c_j \innerpro{f_i,f_j}_{L^2(q_{\X})}-\innerpro{f_i,f_q}_{L^2(q_{\X})} \right).
\end{align*}
 Setting these derivatives to zero (for all $i\in\{1,\ldots,l\}$) gives the claimed equation. Noting that the Hessian is equal to $2G$ (and thus positive-definite) ensures that $c^*$ is a global minimum of $f$.
\end{proof}

	But, of course, neither Gram matrix $G$ nor the vector $\bar{g}$ is accessible, because there is no access to the target measure $q_\X$, so we switch to the empirical counterparts $\tG$ and $\tg$.
	

	
	Then the following lemma is helpful to gain some information on the error made by the empirical average:
	
	\begin{lemma}\label{inner_agg_lem}
		With probability $1-\delta$ we have
		\begin{align*}
			&\bigg|\innerpro{\sqrt{T_{q_\X}} f_u, \sqrt{T_{q_\X}} f_k}_\mathcal{H}  - \frac{1}{m} \sum_{j=1}^m \innerpro{  f_k (x'_j),  f_u (x'_j) }_{\Y}  \bigg| \le C\left( \log^{\frac{1}{2}} \frac{1}{\delta} \right) m^{-\frac{1}{2}},\\
			&\bigg|\innerpro{\sqrt{T_{q_\X}} f_k, \sqrt{T_{q_\X}} f_q}_\mathcal{H} - \frac{1}{n} \sum_{i=1}^n \beta (x_i) \innerpro{ f_k (x_i) ,  y_i }_{\Y}\bigg| \le C\left( \log^{\frac{1}{2}} \frac{1}{\delta} \right)  (n^{-\frac{1}{2}} +  m^{-\frac{1}{2}}) ,
		\end{align*}
		where $C>0$ does not depend on $n,m$ and $\delta.$
	\end{lemma}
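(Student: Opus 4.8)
\textbf{Proof proposal for Lemma \ref{inner_agg_lem}.}

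The plan is to reduce both inequalities to the operator-level bounds already established in Lemma \ref{lem1} by rewriting the inner products in $\Hilbert$ as applications of the operators $T_{q_\X}$, $T_{\x'}$, $T_{\x,\beta}$ to the models $f_k$, and rewriting the empirical averages over $\Y$ in the same way via the reproducing property \eqref{reproducing_property}. For the first inequality, I would first observe that $\innerpro{\sqrt{T_{q_\X}} f_u, \sqrt{T_{q_\X}} f_k}_\Hilbert = \innerpro{T_{q_\X} f_u, f_k}_\Hilbert$ by self-adjointness of $\sqrt{T_{q_\X}}$, and similarly that $\frac1m \sum_{j=1}^m \innerpro{f_k(x_j'), f_u(x_j')}_\Y = \frac1m \sum_j \innerpro{K_{x_j'}^* f_k, K_{x_j'}^* f_u}_\Y = \innerpro{T_{\x'} f_u, f_k}_\Hilbert$, using $K_{x_j'} K_{x_j'}^* = T_{x_j'}$ and the definition of $T_{\x'}$. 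Hence the left-hand side equals $|\innerpro{(T_{q_\X} - T_{\x'}) f_u, f_k}_\Hilbert|$, which by Cauchy--Schwarz is at most $\norm{T_{q_\X} - T_{\x'}}_{\mathcal{L}(\Hilbert)} \norm{f_u}_\Hilbert \norm{f_k}_\Hilbert$. Since $\norm{f_k}_\Hilbert \le \gamma_l$ by assumption, applying \eqref{1st_lem1} of Lemma \ref{lem1} yields the bound $C (\log^{1/2} \frac1\delta) m^{-1/2}$, absorbing $\gamma_l^2$ into the generic constant $C$.

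For the second inequality I would proceed analogously but now split the discrepancy into two pieces corresponding to the two error sources: the covariate-shift reweighting and the label noise. Writing $\innerpro{\sqrt{T_{q_\X}} f_k, \sqrt{T_{q_\X}} f_q}_\Hilbert = \innerpro{T_{q_\X} f_q, f_k}_\Hilbert$ and $\frac1n \sum_i \beta(x_i) \innerpro{f_k(x_i), y_i}_\Y = \innerpro{g_{\x,\y,\beta}, f_k}_\Hilbert$ (using $\innerpro{K_{x_i} y_i, f_k}_\Hilbert = \innerpro{y_i, K_{x_i}^* f_k}_\Y = \innerpro{y_i, f_k(x_i)}_\Y$ and the definition of $g_{\x,\y,\beta}$), the left-hand side is $|\innerpro{T_{q_\X} f_q - g_{\x,\y,\beta}, f_k}_\Hilbert| \le \norm{T_{q_\X} f_q - g_{\x,\y,\beta}}_\Hilbert \, \gamma_l$. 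I then insert the intermediate terms $T_{\x'} f^*$ and $T_{\x,\beta} f^*$ (recall $f^* = f_q$ here, in the notation of Lemma \ref{lem1}, not the optimal aggregation) and use the triangle inequality:
\begin{align*}
\norm{T_{q_\X} f_q - g_{\x,\y,\beta}}_\Hilbert
&\le \norm{(T_{q_\X} - T_{\x'}) f^*}_\Hilbert + \norm{(T_{\x'} - T_{\x,\beta}) f^*}_\Hilbert + \norm{T_{\x,\beta} f^* - g_{\x,\y,\beta}}_\Hilbert.
\end{align*}
The first term is bounded via \eqref{1st_lem1} by $C (\log^{1/2}\frac1\delta) m^{-1/2}$, the second via \eqref{2nd_lem1} by $C(\log^{1/2}\frac1\delta)(n^{-1/2}+m^{-1/2})$, and the third via \eqref{3rd_lem1} by $C(\log^{1/2}\frac1\delta) n^{-1/2}$; summing gives the claimed $C(\log^{1/2}\frac1\delta)(n^{-1/2} + m^{-1/2})$. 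A small bookkeeping point is that each of \eqref{1st_lem1}--\eqref{3rd_lem1} holds with probability $1-\delta$ individually, so to have all three simultaneously I would invoke them at confidence level $1-\delta/3$ (or $1-\delta/4$ together with the first inequality of the lemma), which only changes the constant inside the logarithm and hence is absorbed into $C$; I would state this union-bound step explicitly.

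The main obstacle is not any single estimate — each reduces cleanly to Lemma \ref{lem1} — but rather getting the operator-theoretic identifications exactly right: namely that $\frac1m\sum_j \innerpro{f_k(x_j'),f_u(x_j')}_\Y$ really is $\innerpro{T_{\x'} f_u, f_k}_\Hilbert$ and that $\frac1n\sum_i\beta(x_i)\innerpro{f_k(x_i),y_i}_\Y$ really is $\innerpro{g_{\x,\y,\beta},f_k}_\Hilbert$. These follow from the reproducing property \eqref{reproducing_property} $f(x) = K_x^* f$ together with the adjoint relation $\innerpro{K_x v, f}_\Hilbert = \innerpro{v, K_x^* f}_\Y = \innerpro{v, f(x)}_\Y$, but one must be careful that $K_x$ maps $\Y \to \Hilbert$ (so $K_x^*$ maps $\Hilbert \to \Y$) and that the Gram-type sums collapse correctly. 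Once these bridges between the $\Y$-level empirical quantities and the $\Hilbert$-level operators are in place, the rest is Cauchy--Schwarz plus the uniform norm bound $\norm{f_k}_\Hilbert \le \gamma_l$ and a union bound.
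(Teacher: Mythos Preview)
Your proposal is correct and follows essentially the same route as the paper: both inequalities are obtained by using the reproducing property to identify the empirical $\Y$-averages with $\innerpro{T_{\x'} f_u, f_k}_\Hilbert$ and $\innerpro{g_{\x,\y,\beta}, f_k}_\Hilbert$ respectively, then reducing the discrepancies to $\innerpro{(T_{q_\X}-T_{\x'})f_u,f_k}_\Hilbert$ and $\innerpro{T_{q_\X} f_q - g_{\x,\y,\beta}, f_k}_\Hilbert$, and finally bounding these via Cauchy--Schwarz, the uniform bound $\norm{f_k}_\Hilbert\le\gamma_l$, and the three estimates of Lemma~\ref{lem1} (with the same triangle-inequality splitting $T_{q_\X} f_q \to T_{\x'} f_q \to T_{\x,\beta} f_q \to g_{\x,\y,\beta}$ for the second part). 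Your explicit union-bound remark is a welcome addition that the paper leaves implicit.
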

	\begin{proof}
		Keeping in mind that $f_q, f_k \in  {\mathcal{H}}$ we have
			\begin{align*}
			\innerpro{\sqrt{T_{q_\X}} f_u, \sqrt{T_{q_\X}} f_k}_\mathcal{H} &=\innerpro{T_{\x'} f_k,  f_u}_\mathcal{H} + \innerpro{(T_{q_\X} - T_{\x'}) f_u,  f_k}_\mathcal{H}  \\
			&= 
			 \innerpro{\frac{1}{m} \sum_{j=1}^m K_{x'_j} K_{x'_j}^* f_k, f_u }_\mathcal{H}  + \innerpro{(T_{q_\X} - T_{\x'}) f_u,  f_k}_\mathcal{H} \\
			&= \frac{1}{m} \sum_{j=1}^m \innerpro{  K_{x'_j}^* f_k, K_{x'_j}^* f_u }_{\Y}  + \innerpro{(T_{q_\X} - T_{\x'}) f_u,  f_k}_\mathcal{H} \\
			&= \frac{1}{m} \sum_{j=1}^m \innerpro{  f_k (x'_j),  f_u (x'_j) }_{\Y}  + \innerpro{(T_{q_\X} - T_{\x'}) f_u,  f_k}_\mathcal{H}.
		\end{align*}
		Moreover, from (\ref{1st_lem1}) with probability $ 1- \delta$ we have that
		\begin{align*}
			\bigg| \innerpro{(T_{q_\X} - T_{\x'}) f_u,  f_k}_{\mathcal{H}} \bigg| \leq C \norm{f_u}_\mathcal{H} \norm{f_k}_\mathcal{H} \left( log^{\frac{1}{2}} \frac{1}{\delta} \right) m^{-\frac{1}{2}}.
		\end{align*}
		Then 
		\begin{align*}
			\bigg|\innerpro{\sqrt{T_{q_\X}} f_u, \sqrt{T_{q_\X}} f_k}_\mathcal{H} - \frac{1}{m} \sum_{j=1}^m \innerpro{  f_k (x'_j),  f_u (x'_j) }_{\Y}\bigg|  \le  C\left( \log^{\frac{1}{2}} \frac{1}{\delta} \right) m^{-\frac{1}{2}}.
		\end{align*}
		Now, we prove the second statement in Lemma \ref{inner_agg_lem}. We have 
		\begin{align*}
			\innerpro{\sqrt{T_{q_\X}} f_k, \sqrt{T_{q_\X}} f_q}_\mathcal{H} &= \innerpro{ f_k, T_{q_\X} f_q}_\mathcal{H} = \innerpro{ f_k, T_{q_\X} f_q - g_{\x,\y,\beta}}_\mathcal{H} + \innerpro{ f_k, g_{\x,\y,\beta}}_\mathcal{H} \\
			&= \frac{1}{n} \sum_{i=1}^n \beta (x_i) \innerpro{ f_k, K_{x_i} y_i }_\mathcal{H} + \innerpro{ f_k, T_{q_\X} f_q - g_{\x,\y,\beta}}_\mathcal{H} \\
			&= \frac{1}{n} \sum_{i=1}^n \beta (x_i) \innerpro{K_{x_i}^* f_k,  y_i }_{\Y} + \innerpro{ f_k, T_{q_\X} f_q - g_{\x,\y,\beta}}_\mathcal{H} \\
			&= \frac{1}{n} \sum_{i=1}^n \beta (x_i) \innerpro{ f_k (x_i) ,  y_i }_{\Y} + \innerpro{ f_k, T_{q_\X} f_q - g_{\x,\y,\beta}}_\mathcal{H}.
		\end{align*}
	From Lemma \ref{lem1}, with probability $ 1- \delta$ we have
		\begin{align*}
		&\norm{T_{q_\X} f_q - g_{\x,\y,\beta}}_\mathcal{H}& \\
		&\leq \norm{T_{q_\X} f_q - T_{\x'} f_q}_\mathcal{H} + \norm{T_{\x'} f_q - g_{\x,\y,\beta}}_\mathcal{H} \\
		&\leq \norm{T_{q_\X} f_q - T_{\x'} f_q}_\mathcal{H} +  \norm{T_{\x'} f_q - T_{\x,\beta} f_q}_\mathcal{H}  +  \norm{T_{\x,\beta} f_q - g_{\x,\y,\beta}}_\mathcal{H} \\
		&\leq C \norm{f_q}_\mathcal{H} \left( \log^{\frac{1}{2}} \frac{1}{\delta} \right)  m^{-\frac{1}{2}} + C \norm{f_q}_\mathcal{H} \left( \log^{\frac{1}{2}} \frac{1}{\delta} \right)  (n^{-\frac{1}{2}} + m^{-\frac{1}{2}}) +  \norm{T_{\x,\beta} f_q - g_{\x,\y,\beta}}_\mathcal{H} \\
		& \leq C \norm{f_q}_\mathcal{H} \left( \log^{\frac{1}{2}} \frac{1}{\delta} \right) m^{-\frac{1}{2}} + C \norm{f_q}_\mathcal{H} \left( \log^{\frac{1}{2}} \frac{1}{\delta} \right)  (n^{-\frac{1}{2}} + m^{-\frac{1}{2}}) + C  \left( \log^{\frac{1}{2}} \frac{1}{\delta} \right) n^{-\frac{1}{2}}  .
	\end{align*}
	Then
	\begin{align*}
		\innerpro{ f_k, T_{q_\X} f_q - g_{\x,\y,\beta}}_\mathcal{H}  \leq C \norm{f_k}_\mathcal{H} \left( \log^{\frac{1}{2}} \frac{1}{\delta} \right)  \left(n^{-\frac{1}{2}} + m^{-\frac{1}{2}} \right).
	\end{align*}
	Therefore, 
	\begin{align*}
		\bigg|\innerpro{\sqrt{T_{q_\X}} f_k, \sqrt{T_{q_\X}} f_q}_\mathcal{H} - \frac{1}{n} \sum_{i=1}^n \beta (x_i) \innerpro{ f_k (x_i) ,  y_i }_{\Y}\bigg| \le C\left( \log^{\frac{1}{2}} \frac{1}{\delta} \right) (n^{-\frac{1}{2}} +  m^{-\frac{1}{2}}).
	\end{align*}
	\end{proof}
 \paragraph{Towards our main generalization bound}
 Next we use similar arguments as in Theorem 4 from \citet{gizewski2022regularization} to obtain our main result, Theorem \ref{thm:main}. Lemma \ref{inner_agg_lem} suggests to approximate $G$ and $\bar{g}$ by their empirical counterparts:
\begin{align}
\tilde{G}&=\left(\frac{1}{m} \sum_{j=1}^{m} \innerpro{f_k (x_{j}^{'}) , f_u (x_{j}^{\prime})}_{\Y}\right)_{k, u=1}^{l}, \label{def:G_tilde}\\ 
\tilde{g}&=\left(\frac{1}{n} \sum_{i=1}^{n} \beta(x_{i}) \innerpro{y_{i}, f_k(x_{i})}_{\Y}\right)_{k=1}^{l} \label{def:g_tilde}
\end{align}
which can be effectively computed from data samples. Moreover, again from Lemma \ref{inner_agg_lem} we can argue that with probability $1-\delta$ it holds:

\begin{align}
&\|\bar{g}-\tilde{g}\|_{\R^l }\le  C\left( \log^{\frac{1}{2}} \frac{1}{\delta} \right)  (n^{-\frac{1}{2}} +  m^{-\frac{1}{2}})  \label{eq:g_diff_bound}, \\
&\|G-\tilde{G}\|_{\mathcal{L}(\R^l)}\le C\left( \log^{\frac{1}{2}} \frac{1}{\delta} \right) m^{-\frac{1}{2}} \label{eq:G_diff_bound}.
\end{align}

With the matrix $\tilde{G}$ at hand one can easily check whether or not it is well-conditioned and $\tilde{G}^{-1}$ exists (otherwise one needs to get rid of models with similar performance). Thus the norms $\norm{\tilde{G}}_{\mathcal{L}(\R^l)}$ and $\norm{\tilde{G}^{-1}}_{\mathcal{L}(\R^l)}$ can be bounded independently of $m$ and $n$, due to the fact that all their entries can be bounded as follows (we only do the calculation for the entries of $\tilde{G}$):
\begin{align*}
    |\tilde{G}_{k,u}| &\le \frac{1}{m} \sum_{j=1}^{m} \left| \innerpro{f_k (x_{j}^{'}) , f_u (x_{j}^{\prime})}_{\Y} \right|
    =  \frac{1}{m} \sum_{j=1}^{m} \left| \innerpro{K^{*}_{x_{j}^{'}} f_k , K^{*}_{x_{j}^{'}} f_u }_{\Y} \right| \\ 
    &=   \frac{1}{m} \sum_{j=1}^{m} \left| \innerpro{K_{x_{j}^{'}} K^{*}_{x_{j}^{'}} f_k ,  f_u }_{\mathcal{H}} \right|= \frac{1}{m} \sum_{j=1}^{m} \left| \innerpro{T_{x_{j}^{'}}  f_k ,  f_u }_{\mathcal{H}} \right|\\
     &\le \frac{1}{m} \sum_{j=1}^{m} \norm {T_{x_{j}^{'}}}_{_{\mathcal{L}(\mathcal{H})}}  \norm{f_k}_{\mathcal{H}}   \norm{f_u}_{\mathcal{H}} \le  \kappa \gamma_l^2,
\end{align*}
where we used the reproducing property \eqref{reproducing_property} to obtain the equality in the first line and \eqref{kk_norm} for the last inequality.
Now assume that $m$ is so large that with probability $1-\delta$ we have 
\begin{equation} \label{eq:diff_bound}
\|G-\tilde{G}\|_{\mathcal{L}(\R^l)} <\frac{1}{\left\|\tilde{G}^{-1}\right\|_{\mathcal{L}(\R^l)}} .
\end{equation}
Moreover we can use the following simple manipulation:
\begin{align*}
    G^{-1}=\tilde{G}^{-1}(G\tilde{G}^{-1})^{-1}=\tilde{G}^{-1}(I-(I-G\tilde{G}^{-1}))^{-1}=\tilde{G}^{-1}(I-(\tilde{G}-G)\tilde{G}^{-1})^{-1}.
\end{align*}
Then \eqref{eq:diff_bound} ensures that the Neumann series for $(I-(\tilde{G}-G)\tilde{G}^{-1})^{-1}$ converges and we obtain the following bound:
\begin{align}
\left\|G^{-1}\right\|_{\mathcal{L}(\R^l)} \leq \frac{\left\|\tilde{G}^{-1}\right\|_{\mathcal{L}(\R^l)}}{1-\left\|\tilde{G}^{-1}\right\|_{\mathcal{L}(\R^l)}\left\|G-\tilde{G} \right\| _{\mathcal{L}(\R^l)}}=O(1).
\label{eq:G_inv_bound}
\end{align}
Now we are in the position to prove our main generalization bound \eqref{eq:main_gen_bound} for unsupervised domain adaptation:

\begin{proof}[Proof of Theorem \ref{thm:main}]
We have already discussed that the coefficients in the best approximation $f^*$ to $f_q$ are given by $c^*=$ $\left(c^*_{1}, c^*_{2}, \ldots, c^*_{l}\right)=G^{-1} \bar{g}.$ Since:
\begin{align*}
    G^{-1}(\tilde{g}-\bar{g})+G^{-1}(G-\tilde{G})\tilde{c}=G^{-1} \tilde{g}-c^*+\tilde{c}-G^{-1}\tilde{g}=\tilde{c}-c^*
\end{align*}
then from \eqref{eq:g_diff_bound}--\eqref{eq:G_inv_bound} with probability $1-\delta$ we have

\begin{align}
\|\tilde{c}-c^*\|_{\mathbb{R}^{l}}= & \leq\left\|G^{-1}\right\|_{\mathcal{L}(\R^l)}\left(\|\tilde{g}-\bar{g}\|_{\mathbb{R}^{l}}+\|G-\tilde{G}\|_{\mathcal{L}(\R^l)}\|\tilde{c}\|_{\mathbb{R}^{l}}\right) \nonumber \\
&\le C \left(\log^{\frac{1}{2}} \frac{1}{\delta} \right)  (n^{-\frac{1}{2}} +  m^{-\frac{1}{2}}).  \label{eq:c_diff}
\end{align}

Moreover:

\begin{align}
\mathcal{R}_q (\tilde{f}) - \mathcal{R}_q (f_q)&=\norm{\sqrt{T_{q_\X}} (\tilde{f} - f_q)}_\mathcal{H}^2 \nonumber \\
&\le  \left( \norm{\sqrt{T_{q_\X}} (f^*-f_q)}_\mathcal{H} + \norm{\sqrt{T_{q_\X}} (\tilde{f} - f^*)}_\mathcal{H} \right)^2 \nonumber \\
&\le  2 \norm{\sqrt{T_{q_\X}} (f^*-f_q)}_\mathcal{H}^2 + 2 \norm{\sqrt{T_{q_\X}} (\tilde{f} - f^*)}_\mathcal{H}^2 \nonumber \\
&=  2    \left(\mathcal{R}_q \left(f^* \right) - \mathcal{R}_q (f_q) \right) + 2 \norm{\sqrt{T_{q_\X}} (\tilde{f} - f^*)}_\mathcal{H}^2 \nonumber \\
&\le 2    \left(\mathcal{R}_q \left(f^* \right) - \mathcal{R}_q (f_q) \right) + 2 \left( \sum_{k=1}^l |c^*_k-\tilde{c}_k| \norm{\sqrt{T_{q_\X}} f_k}_\mathcal{H} \right)^2  \nonumber\\
& \leq2    \left(\mathcal{R}_q \left(f^* \right) - \mathcal{R}_q (f_q) \right)+2 l\|c^*-\tilde{c}\|_{\mathbb{R}^{l}}^2 \max _{k}\left\| \sqrt{T_{q_\X}} f_k\right\|_{\mathcal{H}}^2 \nonumber \\
& \leq2    \left(\mathcal{R}_q \left(f^* \right) - \mathcal{R}_q (f_q) \right)+2\norm{\sqrt{T_{q_\X}}}_{\mathcal{L}(\mathcal{H})}^2 l \gamma_l^2 \|c^*-\tilde{c}\|_{\mathbb{R}^{l}}^2 \label{eq:final_bound},
\end{align}
The statement of the theorem follows now from \eqref{eq:c_diff}--\eqref{eq:final_bound} (using again the inequality $(a+b)^2 \le 2(a^2+b^2)$).
\end{proof}

\paragraph{On the dependence of the error bound on the number $l$ of models}
An interesting question is, how the bound in Eq.~\eqref{eq:main_gen_bound} depends on $l$. To this end, let us have a look at the second term in the last line in Eq.~\eqref{eq:final_bound}: $\norm{\sqrt{T_{q_\X}}}_{\mathcal{L}}$ analyzes a sampling operator, thus does not depend on the number of models, same goes for $\gamma_l$, which is just a uniform bound on all our models. To analyze $\|c^*-\tilde{c}\|_{\mathbb{R}^{l}}^2$, let us have a look at the individual factors in the first inequality of Eq.~\eqref{eq:c_diff}. To not overload notation, $C>0$ is used here for any absolute constant that is independent of $l,m,n$ and $\delta$.
\begin{itemize}
\item $\|G-\tilde{G}\|_{\mathcal{L}(\R^l)}$: The individual entries of $G-\tilde{G}$ are (in absolute values) bounded by Lemma \ref{inner_agg_lem}. The proof arguments only involve norm bounds on the associated sampling operators, the uniform bound $\gamma_l$ on all the models and the bound $B$ on $\beta$, thus the absolute constant $C$ there is independent of $l$. By the definition of the matrix Frobenius norm, we thus get 
\begin{equation} \label{eq:G_diff_bound_with_l}
\|G-\tilde{G}\|_{\mathcal{L}(\R^l)}\le C l\left( \log^{\frac{1}{2}} \frac{1}{\delta} \right)  (n^{-\frac{1}{2}} +  m^{-\frac{1}{2}}) 
\end{equation}
\item $\|\tilde{g}-\bar{g}\|_{\mathbb{R}^{l}}$: Similar arguments as before lead to $\|\tilde{g}-\bar{g}\|_{\mathcal{L}(\R^l)}\le C \sqrt{l}\left( \log^{\frac{1}{2}} \frac{1}{\delta} \right)  (n^{-\frac{1}{2}} +  m^{-\frac{1}{2}})$ 
\item $\left\|G^{-1}\right\|_{\mathcal{L}(\R^l)}$: It is natural to assume that there is some constant $c\geq\left\|\tilde{G}^{-1}\right\|_{\mathcal{L}(\R^l)}$.
Otherwise, we can, e.g., orthogonalize our models and coefficients without changing the aggregation, but with reducing the conditioning number (i.e., with reducing $\left\|\tilde{G}^{-1}\right\|$).
It is also natural to assume that $m$ and $n$ are large enough such that $l(n^{-\frac{1}{2}} +  m^{-\frac{1}{2}})<\frac{1}{2c}$. Then applying  Eq.~\eqref{eq:G_diff_bound_with_l} to Eq.~\eqref{eq:G_inv_bound}, we can deduce that $\left\|G^{-1}\right\|_{\mathcal{L}(\R^l)} \le 2c$.
\item $\|\tilde{c}\|_{\mathbb{R}^{l}}$: This quantity can also be assumed to be known independently of $l$, since it is given by our data only.
\end{itemize}
Combining the previous points gives us 
$
\|c^*-\tilde{c}\|_{\mathbb{R}^{l}}^2 \le C l^2 \left( \log \frac{1}{\delta} \right)  (n^{-1} +  m^{{-1}})
$
which finally leads to the refined bound
\begin{align*}
\mathcal{R}_q (\tilde{f}) - \mathcal{R}_q (f_q)
\leq 2  \left(\mathcal{R}_q \left(f^* \right) - \mathcal{R}_q (f_q) \right)
+C l^3 \left( \log \frac{1}{\delta} \right)  (n^{-1} +  m^{{-1}})
\end{align*}
for sufficiently large $l$, $m$ and $n$ and error probability $\delta>0$.
\section{Construction of Function Spaces}\label{sec:spaces}
Let us give a short discussion on the construction of our required function space mentioned in the previous Section \ref{sec:method}, the reproducing kernel space $\Hilbert$. As mentioned already in the main text, the explicit knowledge of $\Hilbert$ is not required, we just need to rely on its existence. First, any of our models $f$ can be regarded as an element of some reproducing kernel space (RKHS) $\tilde{\Hilbert}$ satisfying the assumptions \ref{hyp1}.
This is immediate if $f: \X \to \R$ is a real valued continuous function and we take $k(x,y)=f(x)f(y)$ as the associated reproducing kernel. In the case $f: \X \to \Y$ and $\Y$ is finite dimensional, it is not hard to see that a similar construction is possible, as this case can again be boiled down to the construction of a kernel with real-valued output, see e.g.~Remark~1 in \citet{caponnetto2007optimal} for details.

Overall we end up with a finite sequence of spaces $(\Hilbert_k)_{k=1}^{l+1}$ of functions living on the same domain $\X$ (we have $l+1$ as we also take into account the regression function), and the existence of a RKHS containing all given models and the regression function is not a real restriction. For example, in case of real valued functions, this assumption is automatically satisfied, as linear combinations of functions with the same domain which stem from a finite sequence of RKHSs belong to an RKHS.
This follows from a classical result by N. Aronszajn and R. Godement, see e.g. \citet[Theorem 1.4.]{book}. 

There is also ongoing research on constructing function spaces (and especially associated reproducing kernels) for families of neural networks that are used in applications, see e.g. \citet{ma2022barron} for ReLU networks, \citet{fermanian2021framing} for recurrent networks and \citet{bietti2017invariance, bietti2019group} for convolutional neural networks. Incorporating these into our work may lead to refined generalization bounds that also reflect the nature of our models.
We leave the details open for future work.

\section{Datasets} \label{sec:datasets}
This section provides an overview over all applied datasets from language, image, and time-series domains.

\paragraph{Illustrative example: } For the illustrative example (Figure \ref{fig:grafical_abstract} in the main paper) we rely on the following setting, taken from \citet{shimodaira2000improving,sugiyama2007covariate,you2019towards}: The data points are labelled with $y=\frac{\sin (\pi x)}{\pi x}$ with random noise sampled from the normal distribution $\mathcal{N}\left(0,\left(\frac{1}{4}\right)^{2}\right)$.  Moreover $p_{\X} \sim \mathcal{N}\left( 1,\frac{1}{4}\right), q_{\X} \sim \mathcal{N}\left( 2,\left(\frac{1}{4}\right)^{2} \right)$. The density ratio $\beta$ can be computed analytically and is bounded. We aggregate several linear models with our approach and compare it to the optimal linear model, whose coefficients have been evaluated using a computer algebra system.

\paragraph{Academic Dataset} We rely on the Transformed Moons dataset~\citep{zellinger2021balancing}, allowing us to visualize and address low-dimensional input data.
The dataset consists of two-dimensional input data points forming two classes with a ``moon-shaped" support.
The shift from source to target domain is simulated by a transformation in input space as depicted in Figure~\ref{fig:moons_data}. The results are shown in the following table:  \subfile{tables/adatime_moons}
\begin{figure}[t]
    \center{\includegraphics[width=0.4\linewidth]{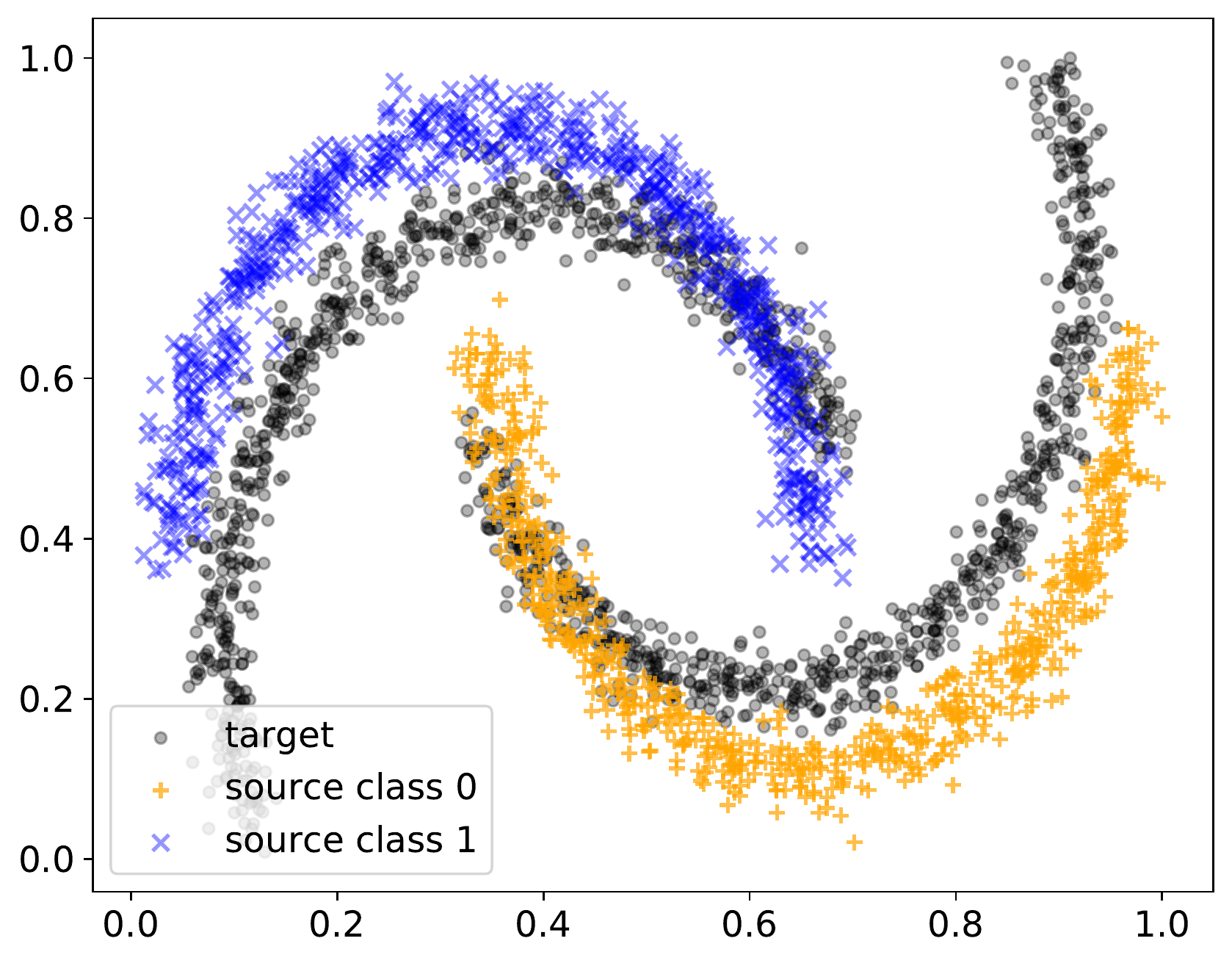}}
    \caption{Transformed Moons dataset. Source data is depicted as blue $+$ and orange $\times$. Target data points are shown as black dots.}
    \label{fig:moons_data}
\end{figure}

\paragraph{Language Dataset} To evaluate our method on a language task, we rely on the Amazon~Reviews~\citep{blitzer2006domain} dataset.
This dataset consists of text reviews from four domains: books (B), DVDs (D),
electronics (E), and kitchen appliances (K).
Reviews are encoded in $5000$ dimensional feature vectors of bag-of-words unigrams and bigrams with binary labels: label $0$ if the product is ranked by $1$ to $3$ stars, and label $1$ if the product is ranked by $4$ or $5$ stars.
From the four categories, we obtain twelve
domain adaptation tasks, where each category serves once as source domain and once as target domain (e.g., see Table~\ref{tab:amazon_details_part1}).
We follow similar data splits as previous works \citep{chen2012marginalized, louizos2015variational, ganin2016domain}.
In particular, we use $4000$ labeled source examples and $4000$ unlabeled target examples for training, and over $1000$ examples for testing.

\paragraph{Image Dataset} Our third dataset is MiniDomainNet, which is based on the DomainNet-2019 dataset~\citep{peng2019moment} consisting of six different image domains (Quickdraw: Q, Real: R, Clipart: C, Sketch: S, Infograph: I, and Painting: P).
We follow~\citet{zellinger2021balancing} and rely on the reduced version of DomainNet-2019, referred to as MiniDomainNet, which reduces the number of classes to the top-five largest representatives in the training set across all six domains.
To further improve computation time, we rely on a ImageNet~\citep{krizhevsky2012imagenet} pre-trained ResNet-18~\citep{he2016deep} backbone. 
Therefore, we assume that the backbone has learned lower-level filters suitable for the "Real" image category, and we only need to adapt to the remaining five domains (e.g., Clipart, Sketch).
This results in five domain adaptation tasks.

\paragraph{Time-Series Dataset}
We based our time-series experiments on the four datasets included in the AdaTime benchmark suite \citep{Ragab:22}, which consists of UCI-HAR, WISDM, HHAR, and Sleep-EDF.
The suite includes four representative datasets spanning $20$ cross-domain real-world scenarios, i.e., human activity recognition and sleep stage classification.
The first dataset is the \emph{Human Activity Recognition} (HAR) \citep{Anguita:13} dataset from the UC Irvine Repository denoted as UCI-HAR, which contains data from three motion sensors (accelerometer, gyroscope and body-worn sensors) gathered using smartphones from $30$ different subjects.
It classifies their activities in several categories, namely, walking, walking upstairs, downstairs, standing, sitting, and lying down. 
The WISDM \citep{Kwapisz:11} dataset is a class-imbalanced variant from collected accelerometer sensors, including GPS data, from $29$ different subjects which are performing similar activities as in the UCI-HAR dataset.
The \emph{Heterogeneity Human Activity Recognition} (HHAR) \citep{Stisen:15} dataset investigate sensor-, device- and workload-specific heterogeneities using $36$ smartphones and smartwatches, consisting of $13$ different device models from four manufacturers.
Finally, the \emph{Sleep Stage Classification} time-series setting aims to classify the electroencephalography (EEG) signals into five stages i.e., Wake (W), Non-Rapid Eye Movement stages (N1, N2, N3), and Rapid Eye Movement (REM).
Analogous to \citet{Ragab:22, Eldele:21}, we adopt the Sleep-EDF-20 dataset obtained from PhysioBank \citep{Goldberger:00}, which contains EEG readings from $20$ healthy subjects.
For all datasets, each subject is treated as an own domain, and adopt from a source subject to a target subject.

\section{Experimental Setup}
\label{sec:appendix_experiments}
This section is meant to provide further details on the overall computational setting of our experiments. We start by giving an overview on the used computational resources for the specific datasets and the implementation tools. Next, we describe the network architectures for the individual datasets in greater detail. In the third subsection we elaborate on the construction of our models, and the fourth subsection is devoted to matrix inversion. Finally, in the last subsection, we describe the detailed empirical results and give the complete tables.

\subsection{Computational Resources and Implementations}
Overall, to compute the results in our tables, we trained $16680$ models with an approximate computational budget of $1500$ GPU/hours on one high-performance computing station using $8 \times$NVIDIA P$100$ $16$GB, $512$GB RAM, 40 Cores Xeon(R) CPU E5-2698 v4 @ 2.20GHz on CentOS Linux 7.

Transformed Moons:
$11$ methods $\times\ 14$ parameters $\times\ 1$ domain adaptation tasks $\times\ 3$ seeds $ +\ 3$ density estimator classifier $= 465$ trained models

Amazon Reviews:
$11$ methods $\times\ 14$ parameters $\times\ 12$ domain adaptation tasks $\times\ 3$ seeds $ +\ 12 \times 3$ density estimator classifier $= 5580$ trained models

MiniDomainNet:
$11$ methods $\times\ 8$ parameters $\times\ 5$ domain adaptation tasks $\times\ 3$ seeds $ +\ 5 \times 3$ density estimator classifier $= 1335$ trained models

UCI-HAR:
$11$ methods $\times\ 14$ parameters $\times\ 5$ domain adaptation tasks $\times\ 3$ seeds $ +\ 5 \times 3$ density estimator classifier $= 2325$ trained models

Sleep-EDF: $11$ methods $\times\ 14$ parameters $\times\ 5$ domain adaptation tasks $\times\ 3$ seeds $ +\ 5 \times 3$ density estimator classifier $= 2325$

HHAR:
$11$ methods $\times\ 14$ parameters $\times\ 5$ domain adaptation tasks $\times\ 3$ seeds $ +\ 5 \times 3$ density estimator classifier $= 2325$ trained models

WISDM: $11$ methods $\times\ 14$ parameters $\times\ 5$ domain adaptation tasks $\times\ 3$ seeds $ +\ 5 \times 3$ density estimator classifier $= 2325$ trained models

In Total:
$465 + 5580 + 1335 + 4 \times 2325 = 16680$ trained models \\

All methods have been implemented in Python using the \emph{Pytorch}~\citep[BSD license]{Paszke:17} library.
For monitoring the runs we used Weights~\&~Biases \citep[MIT license]{wandb:20}.
We use \emph{Scikit-learn}~\citep{pedregosa2011scikit} library for evaluation measures and toy datasets, and the \emph{TQDM}~\citep{da2019tqdm} library, and \emph{Tensorboard}~\citep{tensorflow2015-whitepaper} for keeping track of the progress of our experiments.
We built parts of our implementation on the codebase of \citet[MIT License]{zellinger2021balancing} and \citet[MIT License]{Ragab:22}.

\subsection{Architectures and Training Setup}
In this subsection, we provide details on the model architectures and the training setup for every dataset.
Our base architectures are based on the AdaTime benchmark suite, which is a large-scale evaluation of domain adaptation algorithms on time-series data.
We extended the benchmark suite to support $11$ state-of-the-art model architectures on multiple dataset types ranging from language, image to time-series data, addressed by Transformed Moons, Amazon Reviews, MiniDomainNet and the four time-series datasets (UCI-HAR, WISDM, HHAR, and Sleep-EDF) spanning in total $38$ cross-domain real-world scenarios.

\paragraph{Transformed Moons}
For the Transformed Moons dataset we use two sequential blocks with fully-connected layers, 1D-BatchNorm, ReLU activation functions and Dropout.
The full architecture specification can be found in Table~\ref{tab:transformed_moons_model_architecture}. 
The domain classifier (density ratio estimator) uses the same architecture.
We train the class prediction models for $50$ epochs and the domain classifier for $80$ epochs with learning rate $0.001$, weight decay $0.0001$ and batchsize $128$ using the Adam optimizer \citep{kingma2014adam}.
We share the same base architecture and training setup across every domain adaption method (e.g., DANN, HoMM, CMD).
Additional hyper-parameters are reported in Table~\ref{tab:lambdas_bp}.

\begin{table}[h!]
\caption{Model architecture for the Transformed Moons dataset. The values for neural network layers correspond to the number of output units.
}
\label{tab:transformed_moons_model_architecture}
\centering
\scalebox{0.93}{
\begin{tabular}{ l l c}
\toprule
\multicolumn{3}{c}{\textbf{Architecture}} \\
\hline
& \textbf{Layers} & Values\\
\midrule
& Input units & $2$\\
\hline
MLP Block 1 & Fully-connected Layer & $128$\\
& Batch Normalization 1D Layer & $128$\\
& ReLU Activation \\
& Dropout \\
\hline
MLP Block 2 & Fully-connected Layer & $128$\\
& Batch Normalization 1D Layer & $128$\\
& ReLU Activation \\
& Dropout \\
\hline
& Fully-connected Layer & $128$\\
\bottomrule
& \textbf{Methods}\\
& See Table~\ref{tab:adatime_layers_listing2} and Table~\ref{tab:adatime_layers_listing3} for details.\\
 \bottomrule
\end{tabular}
}
\end{table}

\paragraph{Amazon Reviews}
For the Amazon Reviews dataset we use two sequential blocks with fully-connected layers, 1D-BatchNorm, ReLU activation function  and Dropout, analogous to the setup for Transformed Moons.
We also use the same architecture for the domain classifier.
We train the class prediction models for $50$ epochs and the domain classifier for $80$ epochs with learning rate $0.001$, weight decay $0.0001$ and batchsize $128$ using the Adam optimizer \citep{kingma2014adam}.
We share the same base architecture and training setup across every domain adaption method (e.g., DANN, HoMM, CMD).
Additional hyper-parameters are reported in Table~\ref{tab:lambdas_bp}.

\begin{table}[h!]
\caption{Model architecture for the Amazon Reviews dataset. The values for neural network layers correspond to the number of output units.
}
\label{tab:amazon_reviews_model_architecture}
\centering
\scalebox{0.93}{
\begin{tabular}{ l l c}
\toprule
\multicolumn{3}{c}{\textbf{Architecture}} \\
\hline
& \textbf{Layers} & Values\\
\midrule
& Input units & $5000$\\
\hline
MLP Block 1 & Fully-connected Layer & $128$\\
& Batch Normalization 1D Layer & $128$\\
& ReLU Activation \\
& Dropout \\
\hline
MLP Block 2 & Fully-connected Layer & $128$\\
& Batch Normalization 1D Layer & $128$\\
& ReLU Activation \\
& Dropout \\
\hline
& Fully-connected Layer & $128$\\
\bottomrule
& \textbf{Methods}\\
& See Table~\ref{tab:adatime_layers_listing2} and Table~\ref{tab:adatime_layers_listing3} for details.\\
 \bottomrule
\end{tabular}
}
\end{table}

\paragraph{MiniDomainNet}
Following the pre-trained setup from~\citet{peng2019moment}, we use a frozen ResNet-18 backbone model which was trained on ImageNet, and operate subsequent computations on the $512$ dimensional extracted features.
To alleviate overfitting effects on pre-computed features, we perform data augmentation on the images of each batch and forward each batch through the backbone. 
We incorporate zero padding before resizing the images to $256$x$256$ to avoid image distortions. 
Furthermore, in alignment with data augmentation techniques from~\citet{shorten2019survey}, we perform random resized cropping to $224$x$224$ with a random viewport between $70\%$ and $100\%$ of the original image, random horizontal flipping, color jittering of $0.25\%$ on each RGB channel, and a $\pm2$ degree rotation.

After the ResNet-18 backbone output, we add a projection layer, and define the domain adaptation layers on which we use the domain adaptation methods to align the representations.
The backbone and projection layers are defined as a common architecture across the different domain adaptation methods. 
Additional layers are further added for the classification networks, according to the requirements of the individual domain adaptation methods (e.g., CMD, HoMM).
The number of layers/neurons in the upper layers of our architecture have been tuned in order to achieve the best performance in the source-only setup.
See Table~\ref{tab:minidomainnet_layers_listing} for a detailed description of the architecture used.
We perform experiments on all $5$ domain adaptation tasks as defined in Section~\ref{sec:datasets} for each of the previously listed methods, and with $3$ repetitions based on different random weights initialization. 
All class prediction models have been trained for $60$ epochs and domain classifiers for $100$ epochs with Adam optimizer, a learning rate of $0.001$, $\beta_1 = 0.9$, $\beta_2=0.999$, batchsize of $128$ and weight decay of $0.0001$. 
Additional hyper-parameters are reported in Table~\ref{tab:lambdas_bp}.

\begin{table}[h!]
\caption{Model architecture for the MiniDomainNet dataset. The values for neural network layers correspond to the number of output units.
}
\label{tab:minidomainnet_layers_listing}
\centering
\scalebox{0.93}{
\begin{tabular}{ l l c}
\toprule
\multicolumn{3}{c}{\textbf{Architecture}} \\
\hline
& \textbf{Layers} & Values\\
\midrule
Backbone Output Layer & ResNet-18 (Adaptive Average Pooling Layer) &	$512$ \\
\hline
\hline
& Fully-connected Layer & $128$\\
\bottomrule
& \textbf{Methods}\\
& See Table~\ref{tab:adatime_layers_listing2} and Table~\ref{tab:adatime_layers_listing3} for details.\\
 \bottomrule
\end{tabular}
}
\end{table}

\paragraph{AdaTime} Unless stated otherwise, we follow the implementation and hyper-parameter settings as reported in \cite{Ragab:22}. 
We extended the AdaTime suite to comprise a collection of $11$ domain adaptation algorithms. We learned all domain adaptations models according to the following approaches (see also Table~\ref{tab:adatime_layers_listing}, Table~\ref{tab:adatime_layers_listing2} and Table~\ref{tab:adatime_layers_listing3}): Deep Domain Confusion (DDC) \citep{tzeng2014deep}, Correlation Alignment via Deep Neural Networks (Deep-Coral) \citep{Sun2017correlation}, Higher-order Moment Matching (HoMM) \citep{Chen2020moment}, Minimum Discrepancy Estimation for Deep Domain Adaptation (MMDA) \citep{Rahman2020}, Central Moment Discrepancy (CMD) \citep{zellinger2017central}, Deep Subdomain Adaptation (DSAN) \citep{Zhu2021subdomain}, Domain-Adversarial Neural Networks (DANN) \citep{ganin2016domain}, Conditional Adversarial Domain Adaptation (CDAN) \citep{Long2018conditional}, A DIRT-T Approach to Unsupervised Domain Adaptation (DIRT) \citep{Shu2018dirt}, Convolutional deep Domain Adaptation model for Time-Series data (CoDATS) \citep{Wilson2020adaptation}, and Adversarial Spectral Kernel Matching (AdvSKM) \citep{Liu2021kernel}.
The backbone architecture of all models is a 1D-CNN network. It consists of three CNN blocks and each block has a 1D convolutional layer, followed by 1D batch normalization layer, ReLU activation function, 1D max pooling and Dropout.
In the first block, the kernel size of the convolutional layer is set according to the dataset as reported in \citet{Ragab:22}.
After the convolutional blocks, we apply an 1D adaptive pooling layer. 
All methods are trained for $100$ epochs on all datasets. 
The batch size is $32$, except for Sleep-EDF, where we use batch size of $128$. 
All models are trained with Adam optimizer \citep{kingma2014adam} and weight decay of $10^{-4}$.
Additional hyper-parameters are reported in Table~\ref{tab:lambdas_adatime}.

\begin{table}[h!]
\caption{Model backbone for the AdaTime suite. Kernel size, stride, output channels of the convolutional layers are dataset dependent and are chosen according to \citet{Ragab:22}.}
\label{tab:adatime_layers_listing}
\centering
\scalebox{0.93}{
\begin{tabular}{ l l }
\toprule
\multicolumn{2}{c}{\textbf{Architecture}} \\
\hline
& \textbf{Layers}\\
\hline
Conv Block 1 & Convolutional 1D Layer \\
& Batch Normalization 1D Layer \\
& ReLU Activation \\
& Max Pooling 1D Layer\\
& Dropout \\
\hline
Conv Block 2 & Convolutional 1D Layer \\
& Batch Normalization 1D Layer \\
& ReLU Activation \\
& Max Pooling 1D Layer\\
& Dropout \\
\hline
Conv Block 3 & Convolutional 1D Layer \\
& Batch Normalization 1D Layer \\
& ReLU Activation \\
& Max Pooling 1D Layer\\
& Dropout \\
& Adaptive Pooling 1D Layer\\
\bottomrule
& \textbf{Methods}\\
& See Table~\ref{tab:adatime_layers_listing2} and Table~\ref{tab:adatime_layers_listing3} for details.\\
\hline
\end{tabular}
}
\end{table}

\begin{table}[h!]
\caption{Model architecture for the AdaTime dataset. Layer hyper-parameters are dataset dependent and are chosen according to \citet{Ragab:22}.}
\label{tab:adatime_layers_listing2}
\centering
\scalebox{0.93}{
\begin{tabular}{ l l }
\toprule
\multicolumn{2}{c}{\textbf{Method Architectures (Part 1)}} \\
\midrule
& \textbf{DANN}\\
Class Output Head & Fully-connected Layer\\
Domain Classifier Head & Fully-connected Layer\\
& ReLU Activation\\
& Fully-connected Layer\\
& ReLU Activation\\
& Fully-connected Layer\\
\bottomrule
& \textbf{DeepCoral}\\
Class Output Head & Fully-connected Layer\\
\bottomrule
& \textbf{DDC}\\
Class Output Head & Fully-connected Layer\\
\bottomrule
& \textbf{HoMM}\\
Class Output Head & Fully-connected Layer\\
\bottomrule
& \textbf{CoDATS}\\
Class Output Head & Fully-connected Layer\\
& ReLU Activation\\
& Fully-connected Layer\\
& ReLU Activation\\
& Fully-connected Layer\\
\bottomrule
& \textbf{DSAN}\\
Class Output Head & Fully-connected Layer\\
\bottomrule
\end{tabular}
}
\end{table}

\begin{table}[h!]
\caption{Model architecture for the AdaTime dataset. Hyper-parameters are dataset dependent and are chosen according to \citet{Ragab:22}.}
\label{tab:adatime_layers_listing3}
\centering
\scalebox{0.93}{
\begin{tabular}{ l l }
\toprule
\multicolumn{2}{c}{\textbf{Method Architectures (Part 2)}} \\
\midrule
& \textbf{AdvSKM}\\
Class Output Head & Fully-connected Layer\\
AdvSKM Embedder 1 & Fully-connected Layer\\
& Fully-connected Layer\\
& Batch Normalization 1D Layer\\
& Cosine Activation\\
& Fully-connected Layer\\
& Fully-connected Layer\\
& Batch Normalization 1D Layer\\
& Cosine Activation\\
AdvSKM Embedder 2 & Fully-connected Layer\\
& Fully-connected Layer\\
& Batch Normalization 1D Layer\\
& ReLU Activation\\
& Fully-connected Layer\\
& Fully-connected Layer\\
& Batch Normalization 1D Layer\\
& ReLU Activation\\
\bottomrule
& \textbf{MMDA}\\
Class Output Head & Fully-connected Layer\\
\bottomrule
& \textbf{CMD}\\
Class Output Head & Fully-connected Layer\\
\bottomrule
& \textbf{CDAN}\\
Class Output Head & Fully-connected Layer\\
Domain Classifier Head & Fully-connected Layer\\
& ReLU Activation\\
& Fully-connected Layer\\
& ReLU Activation\\
& Fully-connected Layer\\
\bottomrule
& \textbf{DIRT}\\
Class Output Head & Fully-connected Layer\\
Domain Classifier Head & Fully-connected Layer\\
& ReLU Activation\\
& Fully-connected Layer\\
& ReLU Activation\\
& Fully-connected Layer\\
\bottomrule
\end{tabular}
}
\end{table}

\subsection{Model sequence}
\label{sec:model_sequence}
Our algorithm, IWA, constructs an ensemble from a sequence of different classifiers, e.g.~obtained from a sequence of possible hyper-parameter configurations in domain adaptation algorithms. 
To obtain this sequence of models, we train multiple models for every domain adaptation task across all datasets with different hyper-parameter choices.
For the experiments on the language, image and academic dataset, the values of the hyper-parameters are shown in Table \ref{tab:lambdas_bp}.
For the time-series datasets we use the best hyper-parameters in \citet{Ragab:22}, and, to obtain a good sequence of values.
For all settings except MiniDomainnet, we multiply each parameter by $\lambda \in \{0, 0.0001, 0.001, 0.01, 0.05, 0.1, 0.25, 0.5, 0.75, 1, 1.5, 2, 5, 10\}$.
In this way, we generate a sequence of $14$ hyper-parameter choices.
Due to computational limitations, in MiniDomainNet we use $\lambda \in \{0, 0.0001, 0.001, 0.01, 0.1, 1, 5, 10\}$.
All values are listed in Table~\ref{tab:lambdas_bp} and Table~\ref{tab:lambdas_adatime}.



\begin{table}[h!]
\small
\caption{Domain adaptation hyper-parameter sequences for experiments on the datasets Transformed Moons, AmazonReviews, and MiniDomainNet. We multiply each hyper-paramter with a set of scaling factors $\lambda \in \{0, 0.0001, 0.001, 0.01, 0.05, 0.1, 0.25, 0.5, 0.75, 1, 1.5, 2, 5, 10\}$ to obtain a sequence. Due to computational limitations, for MiniDomainNet we use only $\lambda \in \{0, 0.0001, 0.001, 0.01, 0.1, 1, 5, 10\}$.}
\label{tab:lambdas_bp}
\centering
\scalebox{0.91}{
\begin{tabular}{c c c c c c}
\toprule
& & \multicolumn{4}{c}{\textbf{Datasets}} \\
\midrule
\textbf{Method} & \textbf{Hyper-parameter} & \textbf{Transformed Moons} & \textbf{Amazon Reviews} &\textbf{MiniDomainNet} &\\
\midrule
DANN & Classification loss weight & $0.9603$ & $0.9603$ & $0.9603$ \\
 & Domain loss weight & $\lambda \times 0.9238$ & $\lambda \times 0.9238$ & $\lambda \times 0.9238$ \\
\midrule
DeepCoral & Classification loss weight & $0.05931$ & $0.05931$ & $0.05931$ \\
 & Coral loss weight & $\lambda \times 8.452$ & $\lambda \times 8.452$ & $\lambda \times 8.452$\\
\midrule
DDC & Classification loss weight & $0.1593$ & $0.1593$ & $0.1593$\\
 & MMD loss weight & $\lambda \times 0.2048$ & $\lambda \times 0.2048$ & $\lambda \times 0.2048$\\
\midrule
CMD & Classification loss weight & $0.96$ & $0.96$ & $0.96$\\
 & CMD loss weight & $\lambda \times 5.52$ & $\lambda \times 5.52$ & $\lambda \times 5.52$\\
\midrule
HoMM & Classification loss weight & $0.2429$ & $0.2429$ & $0.2429$\\
 & Higher-order-MMD loss weight & $\lambda \times 0.9824$ & $\lambda \times 0.9824$ & $\lambda \times 0.9824$\\
\midrule
CoDATS & Classification loss weight & $0.5416$ & $0.5416$ & $0.5416$\\
 & Adversarial loss weight & $\lambda \times 0.5582$ & $\lambda \times 0.5582$ & $\lambda \times 0.5582$\\
\midrule
DSAN & Classification loss weight & $0.4133$ & $0.4133$ & $0.4133$\\
 & Local MMD loss weight & $\lambda \times 0.16$ & $\lambda \times 0.16$ & $\lambda \times 0.16$\\
\midrule
AdvSKM & Classification loss weight & $0.4637$ & $0.4637$ & $0.4637$\\
 & Adversarial MMD loss weight & $\lambda \times 0.1511$ & $\lambda \times 0.1511$ & $\lambda \times 0.1511$ \\
\midrule
MMDA & Classification loss weight & $0.9505$ & $0.9505$ & $0.9505$\\
 & MMD loss weight & $\lambda \times 0.5476$ & $\lambda \times 0.5476$ & $\lambda \times 0.5476$\\
 & Conditional loss weight & $\lambda \times 0.5167$ & $\lambda \times 0.5167$ & $\lambda \times 0.5167$ \\
 & Coral loss weight & $\lambda \times 0.5838$ & $\lambda \times 0.5838$ & $\lambda \times 0.5838$\\
\midrule
CDAN & Classification loss weight & $0.6636$ & $0.6636$ & $0.6636$\\
 & Adversarial loss weight & $\lambda \times 0.1954$ & $\lambda \times 0.1954$ & $\lambda \times 0.1954$\\
 & Conditional loss weight & $\lambda \times 0.0124$ & $\lambda \times 0.0124$ & $\lambda \times 0.0124$\\
\midrule
DIRT & Classification loss weight & $0.9752$ & $0.9752$ & $0.9752$\\
 & Adversarial loss weight & $\lambda \times 0.3892$ & $\lambda \times 0.3892$ & $\lambda \times 0.3892$\\
 & Conditional loss weight & $\lambda \times 0.09228$ & $\lambda \times 0.09228$ & $\lambda \times 0.09228$\\
 & Virtual adversarial loss weight & $\lambda \times 0.1947$ & $\lambda \times 0.1947$ & $\lambda \times 0.1947$\\

\bottomrule

\end{tabular}
}
\end{table}

\begin{table}[h!]
\small
\caption{Domain adaptation hyper-parameters for experiments on the time-series data. We multiply each hyper-paramter with a set of scaling factors $\lambda \in \{0, 0.0001, 0.001, 0.01, 0.05, 0.1, 0.25, 0.5, 0.75, 1, 1.5, 2, 5, 10\}$ to obtain a sequence.}
\label{tab:lambdas_adatime}
\centering
\scalebox{0.98}{
\begin{tabular}{c c c c c c}
\toprule
& & \multicolumn{4}{c}{\textbf{Datasets}} \\
\midrule
\textbf{Method} & \textbf{Hyper-parameter} & \textbf{UCI-HAR} & \textbf{Sleep-EDF} &\textbf{WISDM} &\textbf{HHAR} \\
\midrule
DANN & Classification loss weight & $9.74$ & $8.3$ & $5.613$ & $0.9603$ \\
 & Domain loss weight & $\lambda \times 5.43$ & $\lambda \times 0.324$ & $\lambda \times 1.857$ & $\lambda \times 0.9238$ \\
\midrule
DeepCoral & Classification loss weight & $8.67$ & $9.39$ & $8.876$ & $0.05931$ \\
 & Coral loss weight & $\lambda \times 0.44$ & $\lambda \times 0.19$ & $\lambda \times 5.56$ & $\lambda \times 8.452$ \\
\midrule
DDC & Classification loss weight & $6.24$ & $2.951$ & $7.01$ & $0.1593$ \\
 & MMD loss weight & $\lambda \times 6.36$ & $\lambda \times 8.923$ & $\lambda \times 7.595$ & $\lambda \times 0.2048$ \\
\midrule
CMD & Classification loss weight & $0.96$ & $0.96$ & $0.96$ & $0.96$ \\
 & CMD loss weight & $\lambda \times 5.52$ & $\lambda \times 5.52$ & $\lambda \times 5.52$ & $\lambda \times 5.52$ \\
\midrule
HoMM & Classification loss weight & $2.15$ & $0.197$ & $0.1913$ & $0.2429$ \\
 & Higher-order-MMD loss weight & $\lambda \times 9.13$ & $\lambda \times 1.102$ & $\lambda \times 4.239$ & $\lambda \times 0.9824$ \\
\midrule
CoDATS & Classification loss weight & $6.21$ & $9.239$ & $7.187$ & $0.5416$ \\
 & Adversarial loss weight & $\lambda \times 1.72$ & $\lambda \times 1.342$ & $\lambda \times 6.439$ & $\lambda \times 0.5582$ \\
\midrule
DSAN & Classification loss weight & $1.76$ & $6.713$ & $0.1$ & $0.4133$ \\
 & Local MMD loss weight & $\lambda \times 1.59$ & $\lambda \times 6.708$ & $\lambda \times 0.1$ & $\lambda \times 0.16$ \\
\midrule
AdvSKM & Classification loss weight & $3.05$ & $2.5$ & $3.05$ & $0.4637$ \\
 & Adversarial MMD loss weight & $\lambda \times 2.876$ & $\lambda \times 2.5$ & $\lambda \times 2.876$ & $\lambda \times 0.1511$ \\
\midrule
MMDA & Classification loss weight & $6.13$ & $4.48$ & $0.1$ & $0.9505$ \\
 & MMD loss weight & $\lambda \times 2.37$ & $\lambda \times 5.951$ & $\lambda \times 0.1$ & $\lambda \times 0.5476$ \\
 & Conditional loss weight & $\lambda \times 7.16$ & $\lambda \times 6.13$ & $\lambda \times 0.4753$ & $\lambda \times 0.5167$ \\
 & Coral loss weight & $\lambda \times 8.63$ & $\lambda \times 3.36$ & $\lambda \times 0.1$ & $\lambda \times 0.5838$ \\
\midrule
CDAN & Classification loss weight & $5.19$ & $6.803$ & $9.54$ & $0.6636$ \\
 & Adversarial loss weight & $\lambda \times 2.91$ & $\lambda \times 4.726$ & $\lambda \times 3.283$ & $\lambda \times 0.1954$ \\
 & Conditional loss weight & $\lambda \times 1.73$ & $\lambda \times 1.307$ & $\lambda \times 0.1$ & $\lambda \times 0.0124$ \\
\midrule
DIRT & Classification loss weight & $7.0$ & $9.183$ & $0.1$ & $0.9752$ \\
 & Adversarial loss weight & $\lambda \times 4.51$ & $\lambda \times 7.411$ & $\lambda \times 0.1$ & $\lambda \times 0.3892$ \\
 & Conditional loss weight & $\lambda \times 0.79$ & $\lambda \times 2.564$ & $\lambda \times 0.1$ & $\lambda \times 0.09228$ \\
 & Virtual adversarial loss weight & $\lambda \times 9.31$ & $\lambda \times 3.583$ & $\lambda \times 0.1$ & $\lambda \times 0.1947$ \\
\bottomrule
\end{tabular}
}
\end{table}

\clearpage

\subsection{Matrix Inversion}

Matrix inversion is a well-known numerical task, especially in cases of limited computing precision and ill-conditioned matrices.
In our case, similar models in the given sequence can cause numerical instability due to limited compute precision.
That is, occasionally a tabula rasa inversion of the matrix $\tG$ in Algorithm~\ref{algo:IWLSLA} is numerically unstable.
Various standard approaches can be applied to handle this common issue, including the~exclusion of similar models and various regularization techniques.
In our computational setup, we rely on the Python routine \href{https://numpy.org/doc/stable/reference/generated/numpy.linalg.pinv.html}{\textit{numpy.linalg.pinv}}, which is based on the eigendecompostion of $\tG$ (coinciding with the singular value decomposition in our case due to positive-definiteness) and an eigenvalue-based regularization based on a treshold value \textit{rcond} for small eigenvalues, see~\citet[pages 138--140]{strang2006linear} for details.
The choice of \textit{rcond} depends on the scale of the Gram matrix and can therefore be chosen by source data only.
Based on evaluating our method on source data only (target domain is fixed to be source domain) on several choices for \textit{rcond}, we obtain a stable choice for \textit{rcond} of $10^{-1}$ for all datasets.

\new{
\subsection{Correlation Analysis: Target Accuracy of Individual Models vs. Aggregation Weight}
Figure~\ref{fig:iwa_acc_ew} in the main paper suggests that there is a positive correlation between the target accuracy of each individual model (orange dashed line, Figure~\ref{fig:iwa_acc_ew} top) and the respective aggregation weight (red bars, Figure~\ref{fig:iwa_acc_ew} bottom), if the linear aggregation of the models is computed by our method IWA.
In the following, we analyze whether this trend holds throughout all other experiments.

More precisely, for a given sequence $f_1,\ldots,f_l$ of models, we compute the Pearson correlation coefficient between the aggregation weights $\tc_1,\ldots,\tc_l$ (see Algorithm~\ref{algo:IWLSLA}) and the corresponding target accuracies $\frac{1}{t}\sum_{i=1}^t \mathbf{1}[y_i=f_1(x_i)],\ldots, \frac{1}{t}\sum_{i=1}^t \mathbf{1}[y_i=f_l(x_i)]$ for the target test data $(x_1,y_1),\ldots,(x_t,y_t)$, where $\mathbf{1}[P]=1$ iff $P$ is true and $\mathbf{1}[P]=0$ otherwise. 
In this context, a positive correlation coefficient means that the aggregation algorithm assigns a higher weight to models performing better on target samples. 
We calculate these coefficients for our method IWA and the other linear regression baselines SOR, TCR, and TMR for all domain adaptation methods across all datasets and cross-domain scenarios. Note that for this analysis we cannot compare to the other baseline TMV, as the count-based aggregation by majority voting does not involve the computation of aggregation weights.

In Figure~\ref{fig:boxpl_corrcoeff_all_methods} and \ref{fig:histpl_corrcoeff_all_methods} we compare the resulting correlation coefficient distribution of our method IWA to the ones for the heuristic baselines SOR, TCR, and TMR. 
Figure~\ref{fig:boxpl_corrcoeff_iwa_all_datasets} compares the correlation coefficients for IWA on different datasets.

We find that IWA shows a stronger positive correlation than other methods, between a model's target accuracy and its aggregation weight.

\label{sec:correlation_analysis}
\begin{figure}[ht]
\centering
\begin{minipage}[c]{0.42\textwidth}
    \centering
    \includegraphics[width=\linewidth]{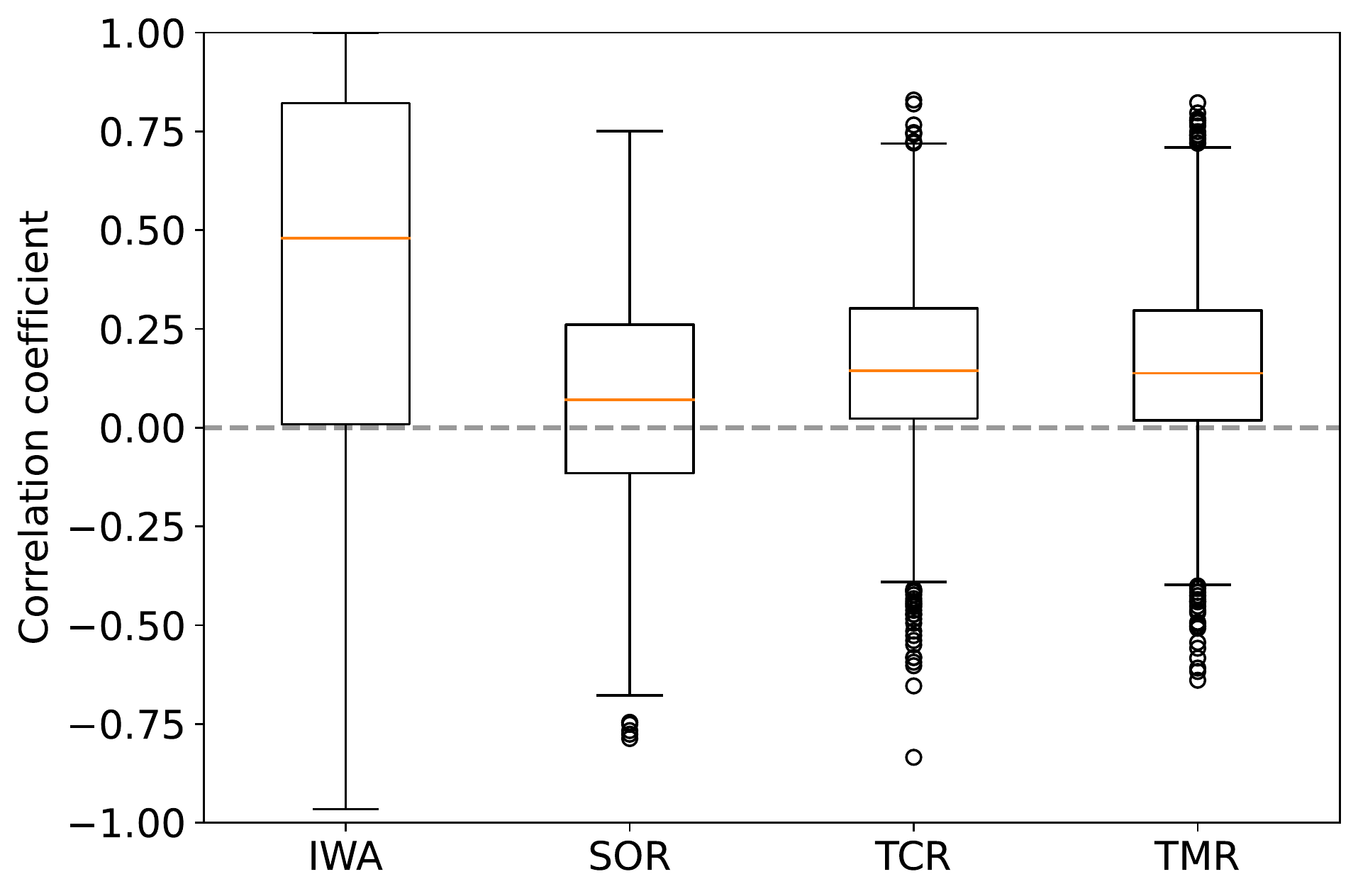}
    \caption{Boxplots of the correlation coefficients of IWA and the linear regression heuristic baselines SOR, TCR, and TMR over all datasets.}
    \label{fig:boxpl_corrcoeff_all_methods}
\end{minipage}\hspace{4em}
\begin{minipage}[c]{0.45\textwidth}
    \centering
    \includegraphics[width=\linewidth]{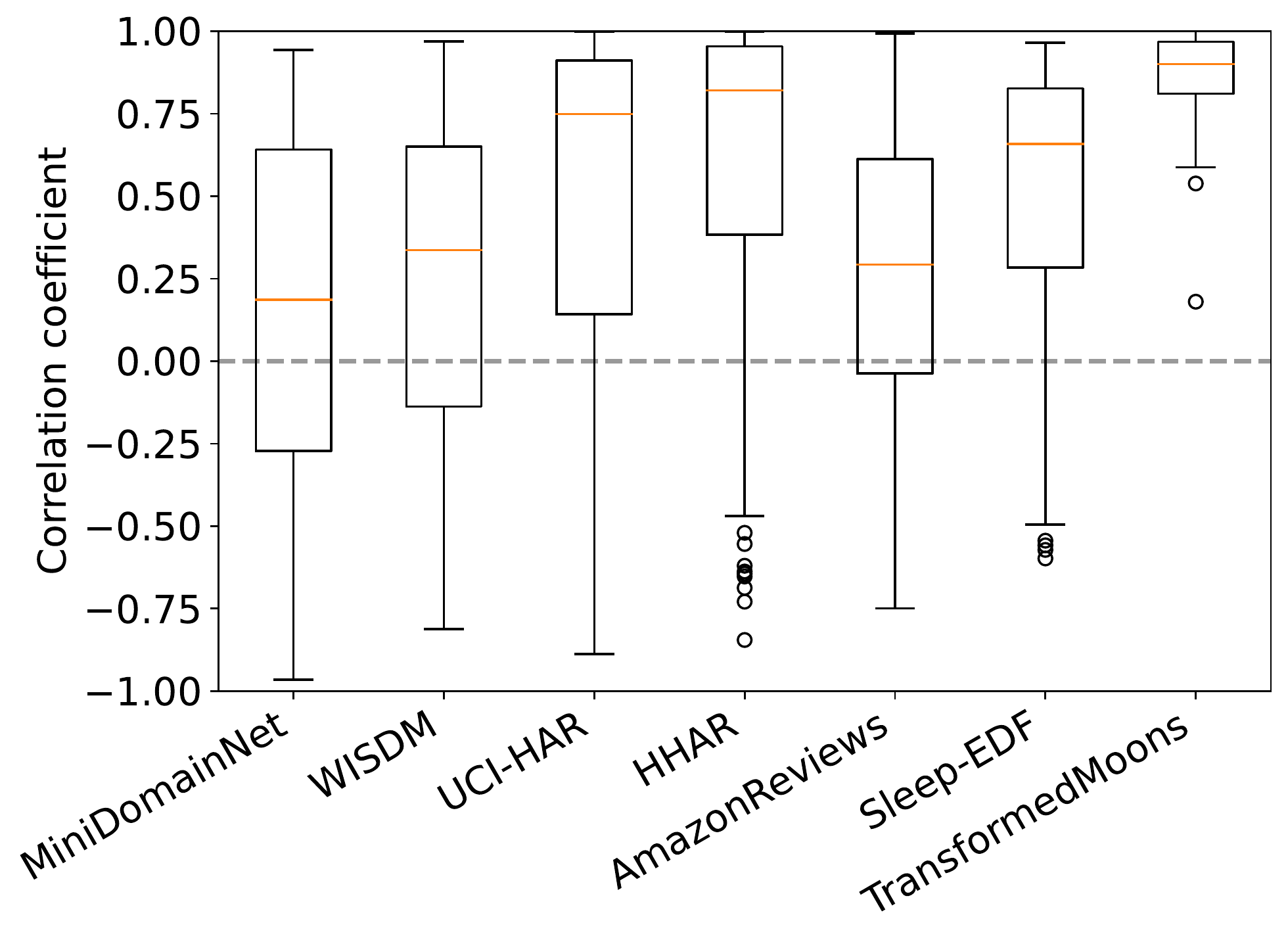}
    \caption{Boxplots of the correlation coefficients of IWA for each dataset.}
    \label{fig:boxpl_corrcoeff_iwa_all_datasets}
\end{minipage}

\begin{minipage}[c]{0.97\textwidth}
    \centering
    \includegraphics[width=\linewidth]{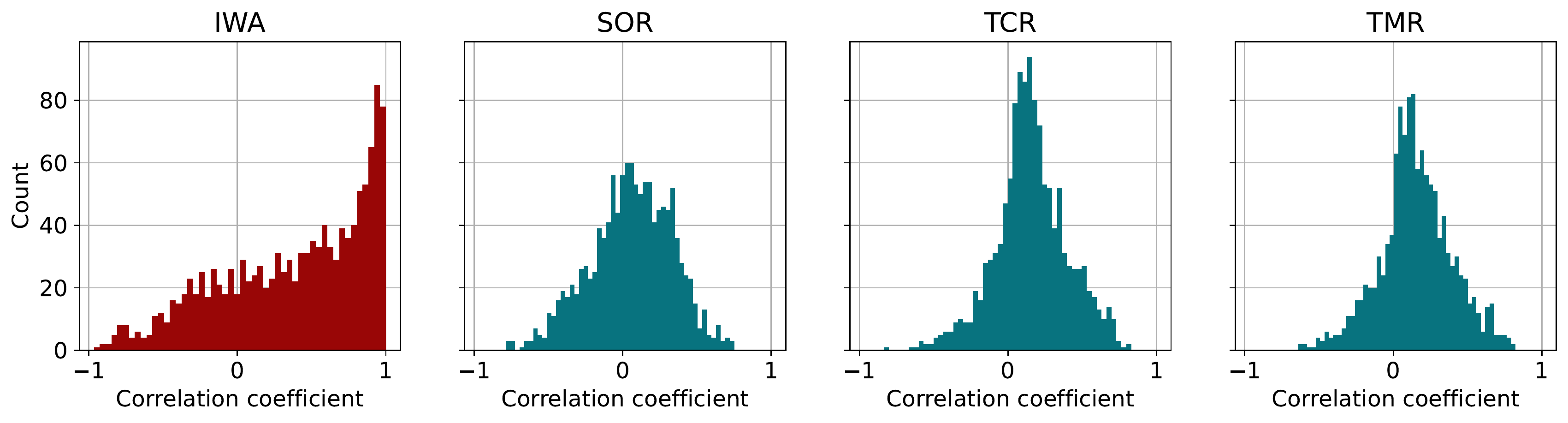}
    \caption{Histogram of the correlation coefficients of IWA and the linear regression heuristic baselines SOR, TCR and TMR over all datasets (showing the same data as Figure~\ref{fig:boxpl_corrcoeff_all_methods}). In comparison to the heuristic baselines, IWA shows a stronger positive correlation between a model's target accuracy and its aggregation weight.}
    \label{fig:histpl_corrcoeff_all_methods}
\end{minipage}
\end{figure}
}

\new{
\subsection{Sensitivity Analysis: Effect of Adding Inaccurate Models}
\label{sec:sensitivity_analysis}

We study the sensitivity of our method with respect to adding inaccurate models in the given sequence of models.
We define an innacurate model as having a target accuracy lower than $80\%$ of the target accuracy of the model computed without domain adaptation (SO).
In particular, we add  $+10, +50$ and $+100$ inaccurate models to the given sequences of models.
One inaccurate model is constructed as follows:
First, a model is chosen uniformly at random from the given sequence of models.
Second, the outputs of the chosen model are corrupted by adding, to half of the elements of its vector-valued output, random Gaussian noise with zero-mean and unit variance.

In Figure~\ref{fig:robustness_ablation}, we show the median performance over all domain adaptation methods for each dataset.
We see, that the performance of our method (IWA) is not very sensitive w.r.t.~an increase in the number of inaccurate models.
This is in contrast to the heuristics (e.g. TMV, TMR or TCR) and model selection method DEV, which show a high sensitivity concerning adding inaccurate models.
SOR also shows stable results; it is, however, clearly outperformed by our method in five out of six datasets.
We conclude, that IWA is not only overall the best performing method, but also the most stable choice concerning inaccurate models in the given sequence.

\begin{figure}[ht]
\centering
\subfloat[]{
  \includegraphics[width=65mm]{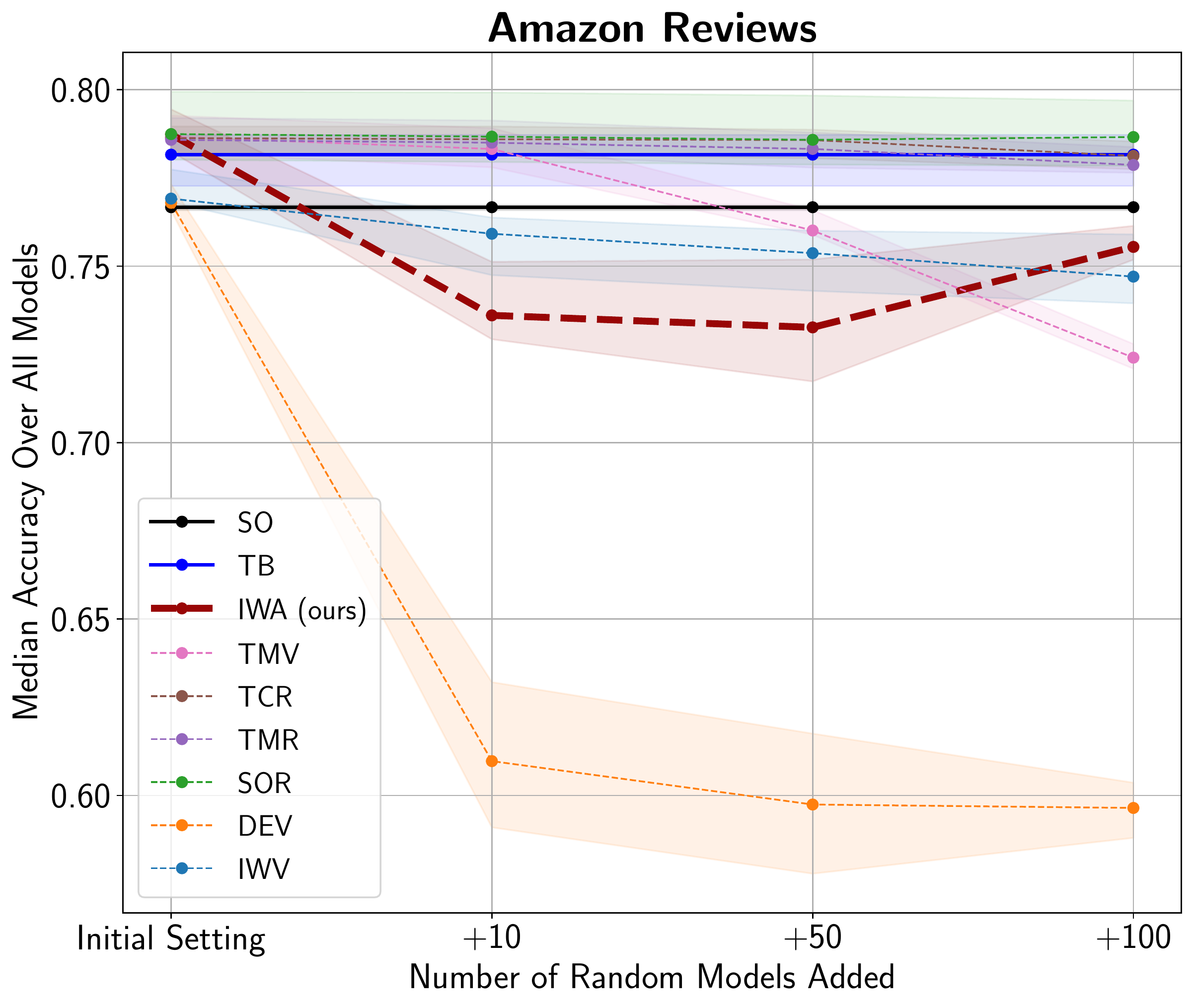}
}
\subfloat[]{
  \includegraphics[width=65mm]{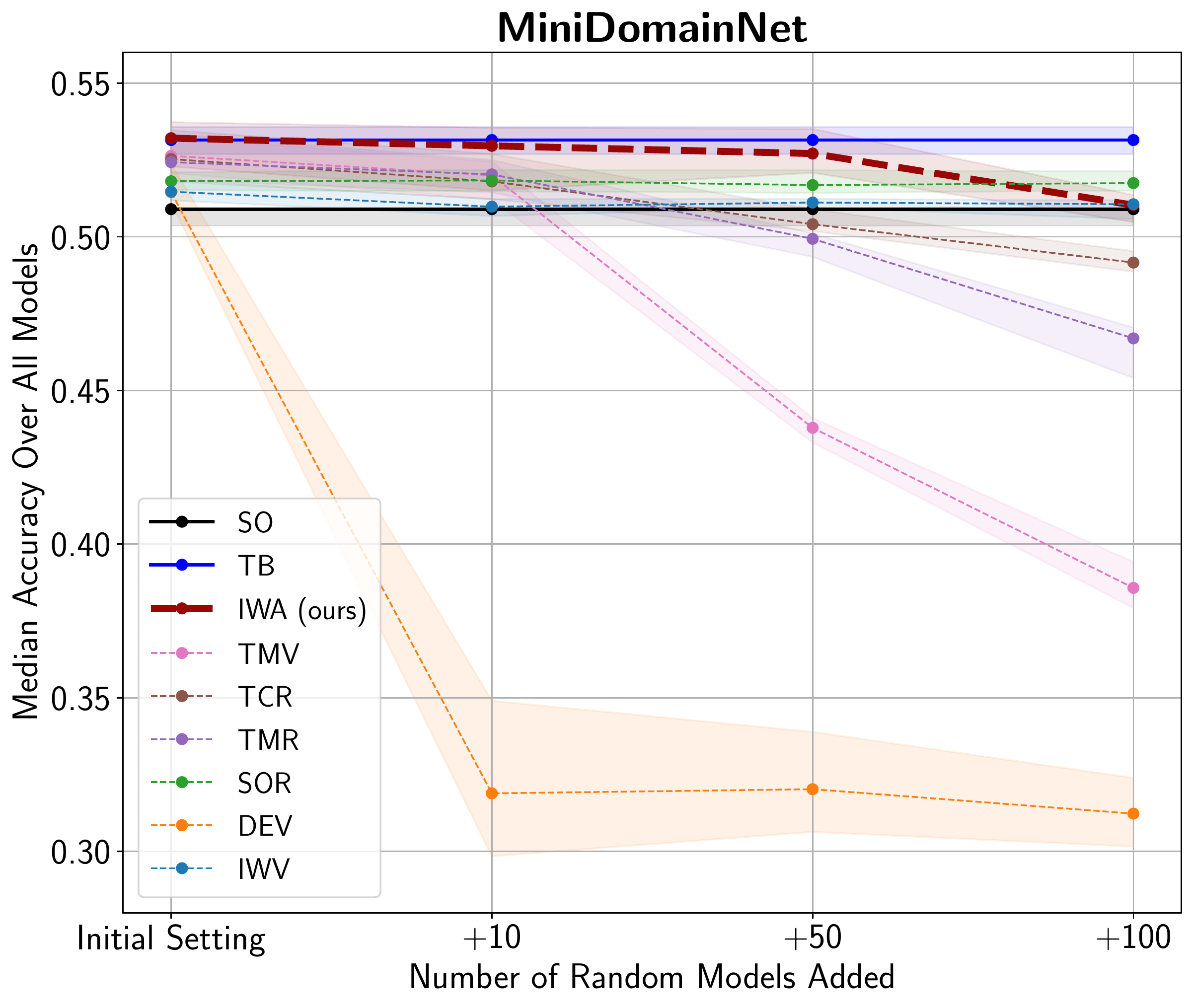}
}
\vspace{10mm}

\subfloat[]{
  \includegraphics[width=65mm]{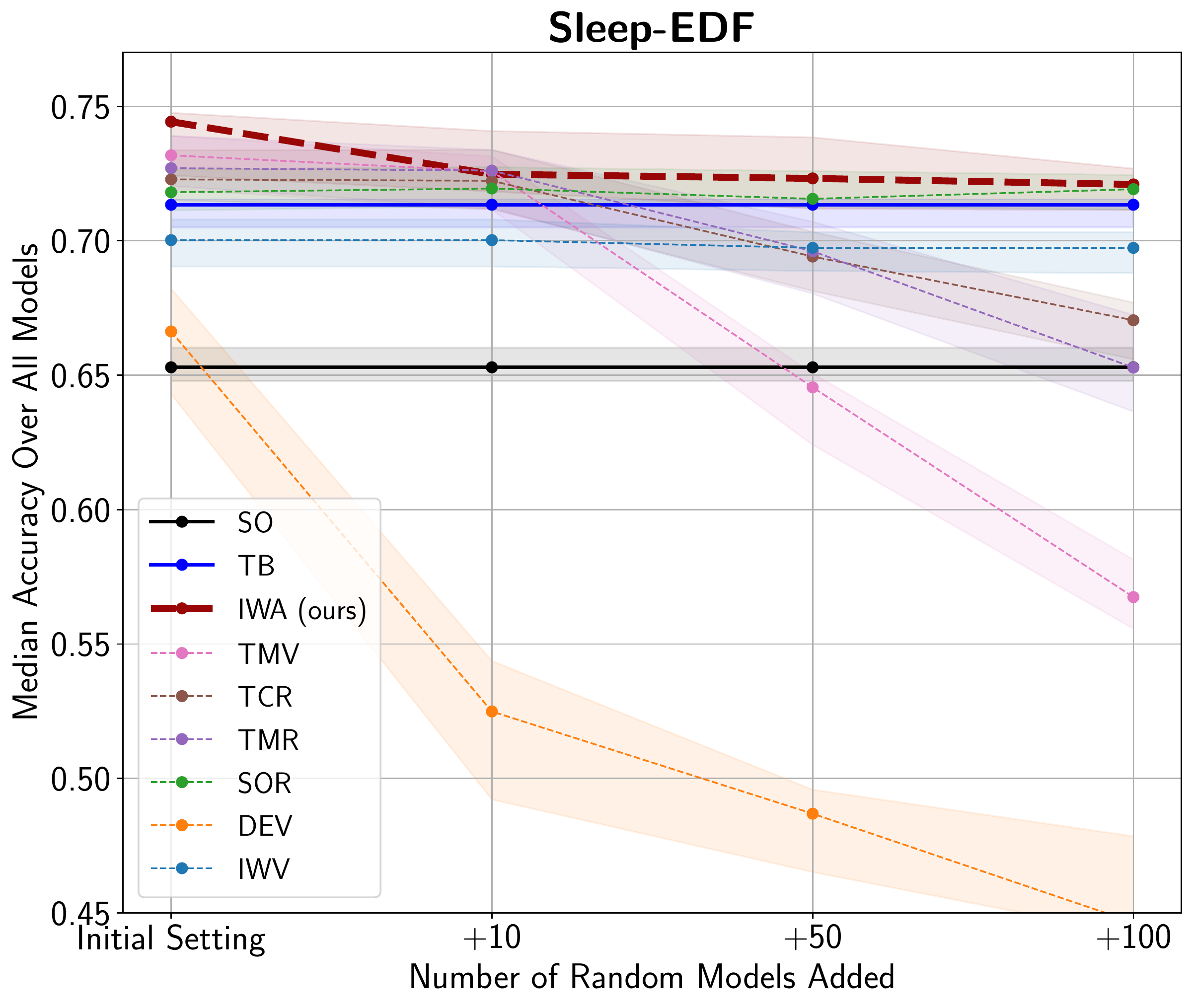}
}
\subfloat[]{
  \includegraphics[width=65mm]{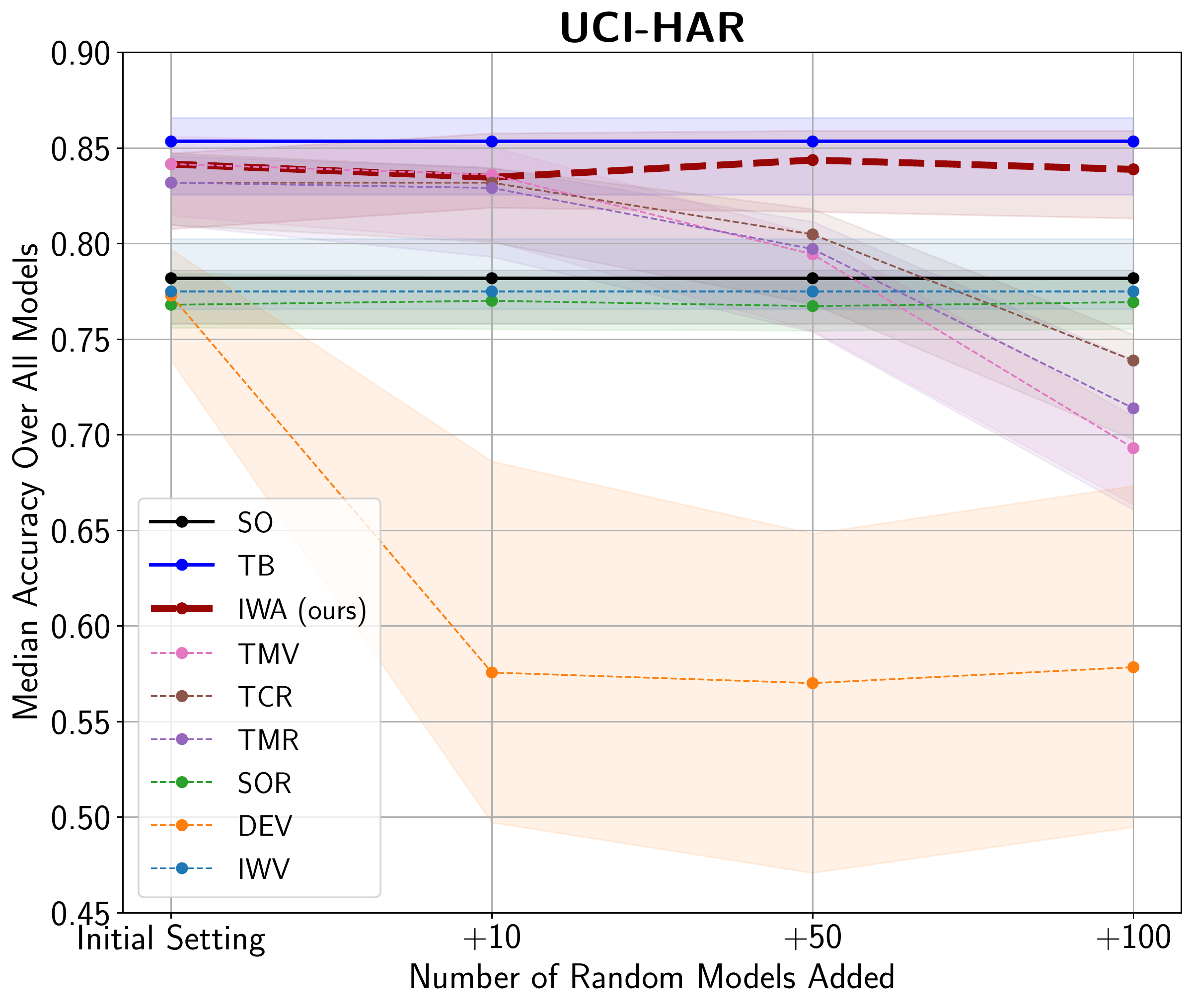}
}
\vspace{10mm}

\subfloat[]{
  \includegraphics[width=65mm]{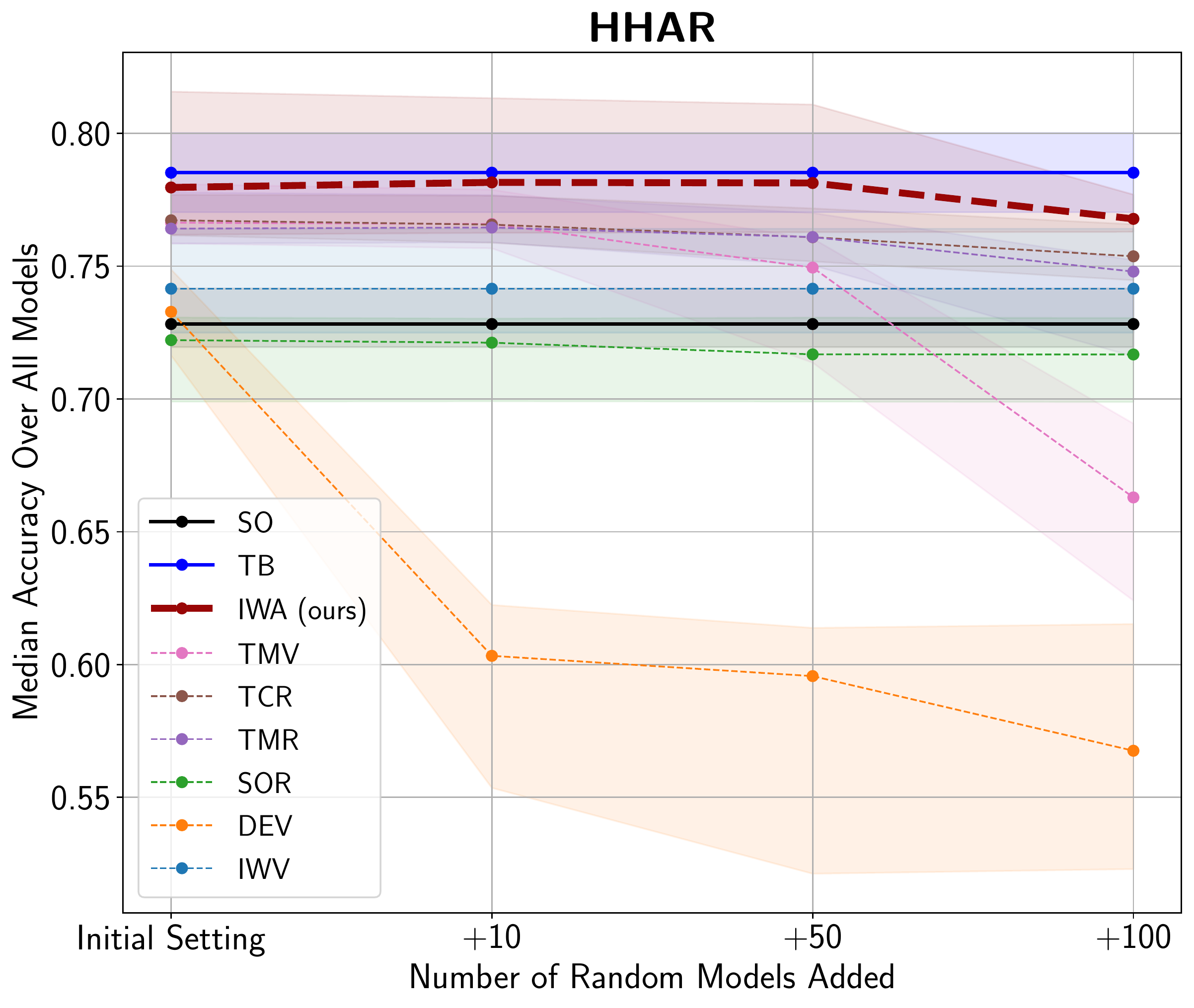}
}
\subfloat[]{
  \includegraphics[width=65mm]{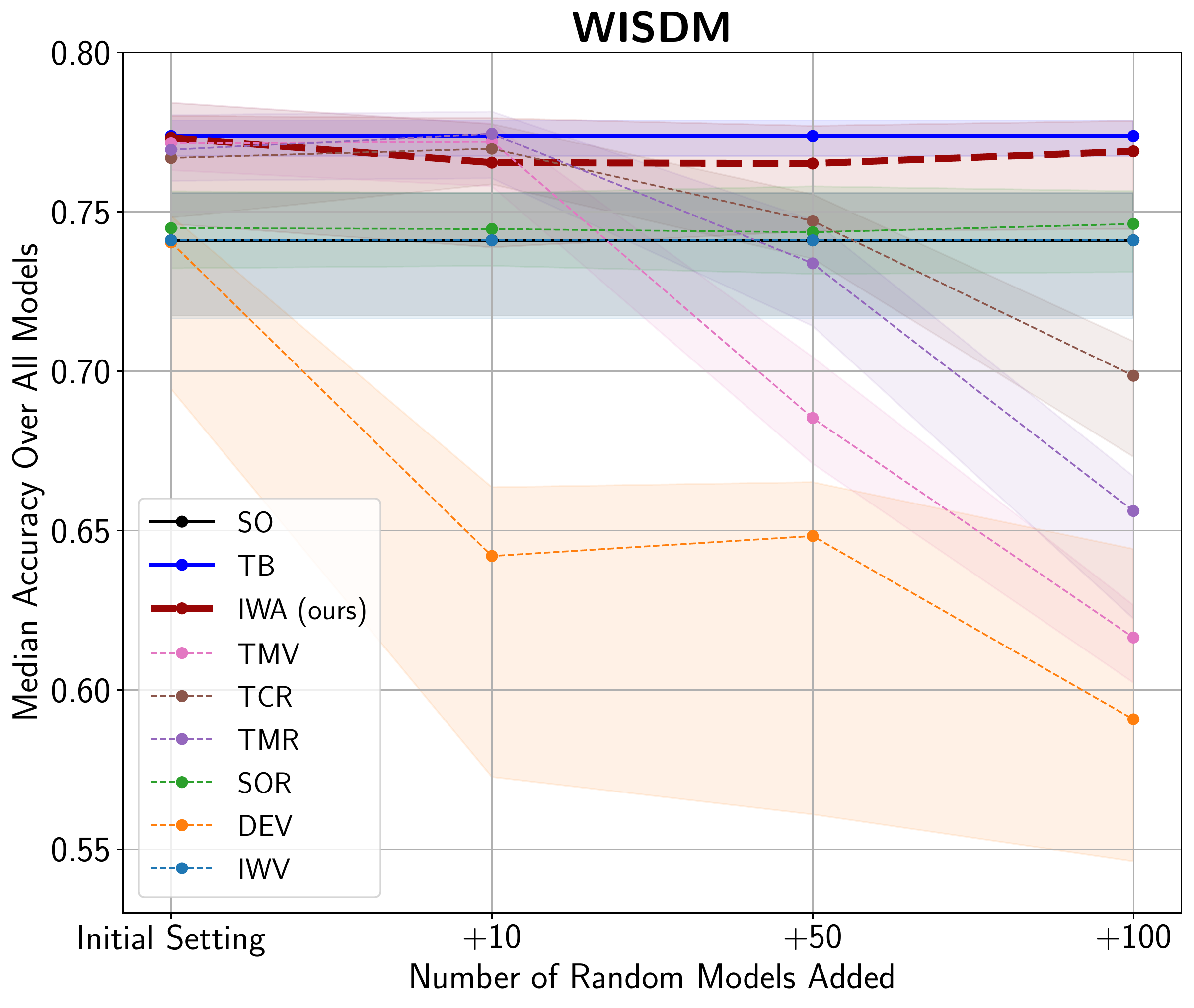}
}
\caption{Sensitivity of methods for parameter choice issues w.r.t.~adding inaccurate models in the given sequence of models; separate for each dataset (a--f), averaged over all domain adaptation methods, source-target pairs, and random seeds.
Horizontal axes: Number of inaccurate models added to the initial sequence of models.
Vertical axes: Target accuracy.
Solid lines indicate median and shaded area indicate $50\%$ confidence intervals.
}
\label{fig:robustness_ablation}
\end{figure}
}

\subsection{Detailed Empirical Results}
\label{sec:detailed_emp_results}
In this section, we add all result tables for the datasets described in the main paper.
Table~\ref{tab:amazon_details_part1}, Table~\ref{tab:amazon_details_part2} and Table~\ref{tab:amazon_details_part3} show all domain adaptation tasks for the Amazon Review dataset.
Table~\ref{tab:minidomainnet_details} shows all domain adaptation tasks for the MiniDomainNet experiments.
Table~\ref{tab:egg_details_part1}, Table~\ref{tab:egg_details_part2}, Table~\ref{tab:har_details_part1},
Table~\ref{tab:har_details_part2},
Table~\ref{tab:hhar_details_part1},
Table~\ref{tab:hhar_details_part2},
Table~\ref{tab:wisdm_details_part1}, and
Table~\ref{tab:wisdm_details_part2} show all domain adaptation task results for the time-series datasets.

\paragraph{Baselines}
As addressed in the main paper, our method, IWA, is compared to  ensemble learning methods that use linear regression and majority voting as \textit{heuristic} for model aggregation, and, model selection methods with \textit{theoretical error guarantees}.
The heuristic baselines are majority voting on target data (TMV), 
source-only regression (SOR), target majority voting regression (TMR), target confidence average regression (TCR).
The model selection methods with theoretical error guarantees are importance weighted validation (IWV) \citep{sugiyama2007covariate} and deep embedded validation (DEV) \citep{kouw2019robust}.
The tables also provide a column for source-only (SO) performance and target-best (TB) performance.
We highlight in bold the performance of the best performing method with theoretical error guarantees, and in italic the best performing heuristic.

\clearpage


\subsubsection{Detailed Summary Results}
\subfile{tables/all_datasets}
\subfile{tables/adatime_amazon}
\subfile{tables/adatime_minidomainnet}
\subfile{tables/adatime_overview_new}

\clearpage

\subsubsection{Detailed Amazon Reviews Results}
\subfile{tables/adatime_amazon_part1}
\subfile{tables/adatime_amazon_part2}
\subfile{tables/adatime_amazon_part3}

\clearpage

\subsubsection{Detailed MiniDomainNet Results}

\subfile{tables/adatime_minidomainnet_1}
\subfile{tables/adatime_minidomainnet_part1}
\subfile{tables/adatime_minidomainnet_part2}

\clearpage

\subsubsection{Detailed Time-Series Results}
\subfile{tables/adatime_eeg_part1}
\subfile{tables/adatime_eeg_part2}
\subfile{tables/adatime_har_part1}
\subfile{tables/adatime_har_part2}
\subfile{tables/adatime_hhar_sa_part1}
\subfile{tables/adatime_hhar_sa_part2}
\subfile{tables/adatime_wisdm_part1}
\subfile{tables/adatime_wisdm_part2}

\clearpage

\end{document}